\documentclass{article}
\usepackage{amsthm}
\usepackage{amsmath}
\usepackage{amssymb}
\usepackage{graphicx}
\usepackage{enumitem}
\usepackage{tikz}
\usepackage{physics}
\usepackage{braket}
\usepackage{mathtools}
\usepackage{algorithm}
\usepackage{algcompatible}

\usepackage{hyperref}
\usepackage{authblk}
\usepackage{fourier}
\usepackage[numbers]{natbib}
\bibliographystyle{unsrtnat}

\usepackage[final]{changes}


\newtheorem{theorem}{Theorem}
\newtheorem{lemma}[theorem]{Lemma}
\newtheorem{proposition}[theorem]{Proposition}

\newtheorem{corollary}[theorem]{Corollary}
\newtheorem{definition}[theorem]{Definition}

\newcommand{\acks}[1]{\section*{Acknowledgments}#1}

\title{Online Self-Concordant and Relatively Smooth Minimization, With Applications to \\ Online Portfolio Selection and \\ Learning Quantum States}

\author[1]{Chung-En Tsai}
\author[2,3,4,5,6]{Hao-Chung Cheng}
\author[1,3,5]{Yen-Huan Li}
\affil[1]{Department of Computer Science and Information Engineering, National Taiwan University}
\affil[2]{Department of Electrical Engineering and Graduate Institute of Communication Engineering, National Taiwan University}
\affil[3]{Department of Mathematics, National Taiwan University}
\affil[4]{Hon Hai (Foxconn) Quantum Computing Centre}
\affil[5]{Center for Quantum Science and Engineering,\protect\\National Taiwan University}
\affil[6]{\added{Physics Division, National Center for Theoretical Sciences,\newline Taipei 10617, Taiwan}}

\newcommand*{\ver}{arxiv}
\newcommand{\tildeEG}{$\widetilde{\text{EG}}$}
\newcommand{\poly}{\mathrm{poly}}
\newcommand{\du}{\mathrm{d}}
\newcommand{\eu}{\mathrm{e}}
\DeclareMathOperator{\dom}{dom}
\DeclareMathOperator{\inte}{int}
\DeclareMathOperator{\ri}{ri}
\DeclareMathOperator*{\argmin}{arg\,min}

\date{}

\begin{document}

\maketitle

\begin{abstract}%
Consider an online convex optimization problem where the loss functions are self-concordant barriers, smooth relative to a convex function $h$, and \emph{possibly non-Lipschitz}. 
We analyze the regret of online mirror descent with $h$. 
Then, based on the result, we prove the following in a \emph{unified} manner. 
Denote by $T$ the time horizon and $d$ the parameter dimension. 
\begin{enumerate}[leftmargin=!,rightmargin=0.3in]
\item For online portfolio selection, the regret of \tildeEG{}, a variant of exponentiated gradient due to \citet{Helmbold1998}, is $\tilde{O} ( T^{2/3} d^{1/3} )$ when $T > 4 d / \log d$. 
This improves on the original $\tilde{O} ( T^{3/4} d^{1/2} )$ regret bound for \tildeEG{}. 
\item For online portfolio selection, the regret of online mirror descent with the logarithmic barrier is $\tilde{O}(\sqrt{T d})$. 
The regret bound is the same as that of Soft-Bayes due to \citet{Orseau2017} up to logarithmic terms. 
\item For online learning quantum states with the logarithmic loss, the regret of online mirror descent with the log-determinant function is also $\tilde{O} ( \sqrt{T d} )$. 
Its per-iteration time is shorter than all existing algorithms we know. 
\end{enumerate}
\end{abstract}

\section{Introduction} \label{sec_intro}

Online portfolio selection (OPS) is an online convex optimization problem more than three decades old. 
Unlike standard online convex optimization problems, the loss functions in OPS are neither Lipschitz nor smooth. 
Existing analyses of standard algorithms, such as online mirror descent (OMD) and follow the regularized leader (FTRL), are hence not directly applicable. 
The optimal regret in OPS is known to be $O ( d \log T )$, where $d$ denotes the parameter dimension and $T$ the time horizon, and achieved by Universal Portfolio Selection (UPS) \citep{Cover1991,Cover1996}. 
The per-iteration time of UPS, however, is too high for the algorithm to be practical \citep{Kalai2002}. 
$O(d)$ is perhaps the lowest per-iteration time one can expect; this is actually achieved by \tildeEG{} \citep{Helmbold1998}, the barrier subgradient method (BSM) \citep{Nesterov2011}, and Soft-Bayes \citep{Orseau2017}. 
Nevertheless, the regret bound for \tildeEG{} is $\tilde{O} ( d^{1/2} T^{3/4} )$; 
the regret bound for BSM and Soft-Bayes is $\tilde{O} ( \sqrt{ Td } )$. 
Both are significantly higher than that of UPS in terms of the dependence on $T$. 
Recent researches on OPS focus on achieving logarithmic regret (in $T$) with acceptable per-iteration time. 
ADA-BARRONS achieves $O ( d^2 \poly ( \log T ) )$ regret with $O ( \poly ( d ) T )$ per-iteration time \citep{Luo2018}. 
DONS with AdaMix \citep{Mhammedi2022} and BISONS \citep{Zimmert2022} both possess the same $O ( d ^ 2 \poly ( \log T ) )$ regret guarantee and reduce the per-iteration time to $O ( \poly ( d ) )$.
We summarize existing and our results in the form of an ``efficiency-regret Pareto frontier'' in Figure \ref{fig_frontier}. 

\begin{figure}[ht] \label{fig_frontier}
	\centering
	\caption{Current efficiency-regret Pareto frontier for OPS, assuming $T \geq 4 d / \log d$. The figure is modified from the one by \citet{Zimmert2022}.}
	\label{fig:paretofrontier}
	\begin{tikzpicture}
    \draw[thick,->] (0,0) -- (0,5);
    \draw[thick,->] (0,0) -- (7,0);

    \node[] at (8.5,0) {Per-iteration time};
    \node[] at (0,5.3) {Regret ($\tilde{O}$)};

    \node[] at (1,-0.25) {$d$};
    \node[] at (2.5,-0.25) {$\poly (d)$};
    \node[] at (4.3,-0.25) {$\poly (d) T$};
    \node[] at (6.1,-0.25) {$\poly (d T)$};

    \node[] at (-0.7,1) {$d \log T$};
    \node[] at (-1.2,2) {$d ^ 2 \poly (\log T)$};
    \node[] at (-0.8,3.5) {$d^{1/2}T^{1/2}$};
    \node[] at (-0.8,4.5) {$d^{1/3}T^{2/3}$};

    \draw[dotted] (1,0) -- (1,4.5);
    \draw[dotted] (2.5,0) -- (2.5,2);
    \draw[dotted] (4.3,0) -- (4.3,2);
    \draw[dotted] (6,0) -- (6,1);

    \draw[dotted] (0,4.5) -- (1,4.5);
    \draw[dotted] (0,3.5) -- (1,3.5);
    \draw[dotted] (0,2) -- (4.3,2);
    \draw[dotted] (0,1) -- (6,1);

		\node[] at (2.4,4.5) {\small\bf \tildeEG{} (Theorem~\ref{thm_eg})};
    \filldraw [] (1,4.5) circle (1pt);

    \node[] at (2.3,3.8) {\small BSM, Soft-Bayes};
    \node[] at (2.8,3.4) {\small\bf LB-OMD (Theorem~\ref{thm_lb_omd})};
    \filldraw [] (1,3.5) circle (1pt);

    \node[] at (5.55,2) {\small ADA-BARRONS};
    \filldraw [] (4.3,2) circle (1pt);

    \node[] at (3.38,2.7) {\small DONS+AdaMix};
    \node[] at (2.9,2.3) {\small BISONS};
    \filldraw [] (2.5,2) circle (1pt);

    \node[] at (6.45,1) {\small UPS};
    \filldraw [] (6,1) circle (1pt);
\end{tikzpicture}
\end{figure}
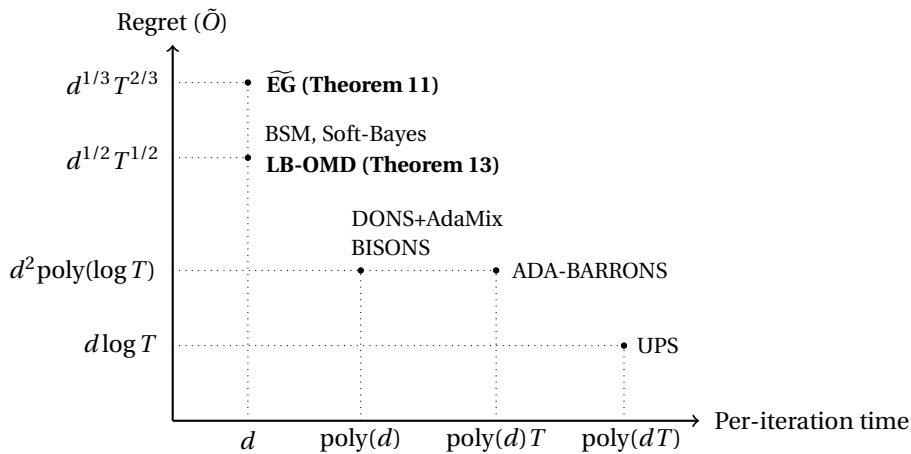

The generalization of OPS for the quantum setup (Section \ref{sec_quantum_generalization}), interestingly, corresponds to online learning quantum states. 
This generalization can be understood either as online maximum-likelihood quantum state estimation \citep{Lin2021b} or a quantum version of probability forecasting with the logarithmic loss \citep{Zimmert2022}.
The quantum generalization is even more challenging: 
The dimension grows exponentially with the number of qubits (quantum bits). 
Scalability with respect to the dimension is hence critical. 

To the best of our knowledge, there are only two algorithms for the quantum generalization. 
Both are obtained via generalizing existing OPS algorithms. 
\begin{enumerate}
\item Q-Soft-Bayes \citep{Lin2021b} is a quantum generalization of Soft-Bayes. 
It inherits the $O ( \sqrt{ T d \log d } )$ regret guarantee of Soft-Bayes with $\tilde{O} ( d^3 )$ per-iteration time. 

\item Schr\"{o}dinger's-BISONS \citep{Zimmert2022} is a quantum generalization of BISONS. 
It has $O ( d^3 \log^2 T )$ regret and $\tilde{O} ( d^6 )$ per-iteration time\footnote{\citet{Zimmert2022} only claims a $O( \poly (d) )$ per-iteration time, achieved by second-order convex optimization methods. We hence evaluate the per-iteration time as $\tilde{O} ( ( d^2 )^3 )$.}. 
\end{enumerate}
In OPS, unless the dimension is extremely high, one would prefer BISONS to Soft-Bayes. 
In the quantum generalization, regarding the exponentially growing dimension, the situation differs. 
It turns out that Q-Soft-Bayes' scalability with respect to the dimension makes it competitive. 

In this paper, instead of pursuing a logarithmic regret with acceptable dimension scalability, we focus on sublinear-regret algorithms that scale with dimension. 
In particular, we prove the following. 
\begin{enumerate}
\item For OPS, the regret of \tildeEG, a variant of exponentiated gradient proposed by \citet{Helmbold1998}, is indeed $\tilde{O} ( d^{1/3} T^{2/3} )$ when $T > 4 d / \log d$. 
This improves upon the $\tilde{O} ( d^{1/2} T^{3/4} )$ regret bound by \citet{Helmbold1998}. 
\item For OPS, LB-OMD (OMD with the logarithmic barrier) yields $\tilde{O} ( \sqrt{ d T } )$ regret with $\tilde{O} ( d )$ per-iteration time when $T > d$. 
Both are the same as those of BSM and Soft-Bayes. 
\item For the quantum generalization of OPS, Q-LB-OMD (OMD with the log-det function) yields $\tilde{O} ( \sqrt{d T} )$ regret with $\tilde{O} ( d^3 )$ per-iteration time when $T > d$. 
Both the regret bound and time cost are the same as those of Q-Soft-Bayes. 
Nevertheless, each iteration of Q-Soft-Bayes requires computing one matrix exponential and two matrix logarithms, whereas each iteration of Q-LB-OMD only requires computing one eigendecomposition; 
the time cost of all other operations in the two algorithms are negligible. 
\end{enumerate}

The three results above are derived in a \emph{unified} manner. 
We approach OPS and its quantum generalization via analyzing the regret of OMD for loss functions that are relatively smooth, \emph{possibly non-Lipschitz}, self-concordant barriers. 
This is motivated by the fact that the loss functions in OPS are standard instances of self-concordant barriers \citep{Nesterov1994} and of relatively smooth functions \citep{Bauschke2017}, simultaneously. 
For online convex optimization with such loss functions, we derive a regret bound for OMD. 
Then, we prove the three results mentioned above as simple consequences of this result, thereby providing a principled approach to achieving the performances of Soft-Bayes and BSM and slightly better performances than Q-Soft-Bayes. 

\added{
Online portfolio selection and online learning quantum states are two special cases covered by our framework. 
In most relatively smooth convex optimization problems, the loss functions are also self-concordant barriers. 
Therefore, our framework also applies to most problems studied by \citet{Bauschke2017}, \citet{Lu2018}, and \citet{Eshraghi2022} for relatively smooth convex optimization, in their online convex optimization formulations. 
}

\paragraph{Notation}
Let $N \in \mathbb{N}$. 
We write $[ N ]$ for the set $\set{ 1, \ldots, N }$. 
We write $\mathbb{R}_+$ and $\mathbb{R}_{++}$ for the sets of non-negative numbers and strictly positive numbers, respectively. 
We denote by $\mathbb{S}^d$, $\mathbb{S}_+^d$, and $\mathbb{S}_{++}^d$ the set of Hermitian matrices, Hermitian positive semi-definite matrices, and Hermitian positive definite matrices in $\mathbb{C}^{d \times d}$, respectively. 
For any vecror $v \in \mathbb{R}^d$, we write $v (i)$ for its $i$-th entry. 
Let $f: \mathbb{R}^d \to [- \infty, \infty]$ be an extended-valued function. 
Its domain is denoted by $\dom f$. 
The interior and relative interior of a set $\mathcal{S}$ are denoted by $\inte \mathcal{S}$ and $\ri \mathcal{S}$, respectively. 
Let $H \in \mathbb{S}^d$ with eigendecomposition $H = \sum_i \lambda_i P_i$, where $\lambda_i$ are eigenvalues and $P_i$ are projections. 
Let $f$ be a function whose domain contains $\set{ \lambda_i }$. 
Then, $f ( H ) \coloneqq \sum_i f ( \lambda_i ) P_i$. 


\section{Related Work}

Relevant literature on OPS have been reviewed in Section \ref{sec_intro}. 

\subsection{Online Learning on the Set of Quantum States}
\citet{Warmuth2006} and \citet{Arora2007} independently discovered OMD with the von Neumann entropy for online principal component analysis (PCA) and semi-definite programming, respectively. 
The loss functions they considered are linear. 
\citet{Koolen2011} studied a quantum generalization of probability forecasting with the \emph{matrix entropic loss}. 
Whereas the problem formulation looks similar to the quantum generalization of OPS, the specific matrix entropic loss admits simple solutions based on existing probability forecasting algorithms. 
Another problem closely related to the quantum generalization is online shadow tomography \citep{Aaronson2018,Chen2022a}. 
Online shadow tomography corresponds to online convex optimization with the absolute loss on the set of quantum states. 
The absolute loss is Lipschitz and hence standard from the online convex optimization perspective. 
\citet{Yang2020} studied online convex optimization on the set of quantum states with general Lipschitz losses. 
The most relevant to this paper is perhaps the work on bandit PCA by \citet{Kotlowski2019}. 
As in this paper, \citet{Kotlowski2019} also considered OMD with the log-det function and showed that its iteration rule, though lacking a closed-form expression, can be computed efficiently. 

\subsection{Self-Concordance}

The notions of self-concordance and self-concordant barriers were originally proposed for analyzing and developing convex optimization algorithms \citep{Nesterov1994}. 
Self-concordant barriers have been shown useful as regularizers in several online learning scenarios, such as bandit linear optimization \citep{Abernethy2008} and optimistic online learning \citep{Rakhlin2013a}; 
see also the monograph by \citet{Hazan2016}. 
In online learning, examples of self-concordant barriers as loss functions include probability forecasting \citep[Chapter 9]{Cesa-Bianchi2006}, OPS, and the quantum generalization of OPS. Except for the analysis by \citet{van-Erven2020}, existing results do not explicitly use general properties of self-concordant barrier. 
\citet{Zhang2017} studied the dynamic regret with losses that are self-concordant \emph{and} strictly convex; 
it is easily checked that the strict convexity assumption is violated in OPS and its quantum generalization. 

\subsection{Relative Smoothness}

The notion of relative smoothness was also originally proposed for analyzing and developing convex optimization algorithms, as a generalization of smoothness \citep{Bauschke2017,Lu2018}. 
Although smoothness is a standard assumption in convex optimization literature, this notion is relatively rarely exploited in online convex optimization. 
A few examples include the $L^\star$ bound \citep{Srebro2010,Orabona2022} and studies on the dynamic regret by, e.g., \citet{Jadbabaie2015,Mokhtari2016,Zhang2017}. 
There seems to be only one paper on online convex optimization with relatively smooth losses \citep{Eshraghi2022}. 
The paper considers dynamic regret that in the worst case does not guarantee a sublinear regret; 
its requirement of a Lipschitz Bregman divergence is violated in LB-OMD. 

\subsection{Comparison with BSM and LB-FTRL}
\added{
Our analysis exploits the properties of self-concordant barriers, which may look familiar to online learning experts. 
The use of self-concordant barriers in online learning was perhaps popularized by the ``interior-point methods'' proposed by \citet{Abernethy2012}. 
For OPS, the analyses of BSM \citep{Nesterov2011} and LB-FTRL \citep{van-Erven2020} also exploit the properties of self-concordant barriers. 
The difference lies in where the barrier properties are used. 
In the analyses of the interior-point methods, BSM, and LB-FTRL, it is the regularizer that is assumed to be a self-concordant barrier; 
in our analysis, it is the loss functions that are assumed to be self-concordant barriers. 
The difference is illustrated by our improved analysis of \tildeEG{}. 
The entropy regularizer is not a self-concordant barrier, so the analyses of the interior-point methods, BSM, and LB-FTRL do not directly apply. 
However, \tildeEG{} naturally fits in our framework. }

\added{
Indeed, BSM coincides with LB-FTRL with linearized losses. 
The original analysis of BSM by \citet{Nesterov2011} appears to be complicated to us. 
We provide an arguably simpler regret analysis of BSM, based on the approaches of \citet{Abernethy2012} and \citet{van-Erven2020}, in Appendix~\ref{app_lbftrl}. 
}

\section{Problem Formulations and Preliminaries}

\subsection{Online Convex Optimization} \label{sec_oco}

An online convex optimization problem is a multi-round game between two players, say Learner and Reality. 
Let $\mathcal{X}$ be a convex set. 
In the $t$-th round, Learner announces some $x_t \in \mathcal{X}$, given the history $\set{ x_1, \ldots, x_{t - 1}, f_1, \ldots, f_{t - 1} }$; 
then, Reality announces a proper closed convex function $f_t: \mathcal{X} \to ( - \infty, + \infty ]$, given the history $\set{ x_1, \ldots, x_t, f_1, \ldots, f_{t - 1} }$. 
The loss suffered by Learner in the $t$-th round is given by $f_t ( x_t )$. 
Let $T \in \mathbb{N}$ be the time horizon. 
The \emph{regret} is defined as
\[
R_T ( x ) \coloneqq \sum_{t = 1}^T f_t ( x_t ) - \sum_{t = 1}^T f_t ( x ) , \quad \forall x \in \mathcal{X} . 
\]
We will write $R_T$, without an input $x$, for the quantity $\sup_{x \in \mathcal{X}} R_T ( x )$. 

\subsection{Online Portfolio Selection}

Let $d \in \mathbb{N}$. 
OPS corresponds to an online convex optimization problem where 
\begin{itemize}
\item the set $\mathcal{X}$ is the probability simplex $\Delta \coloneqq \set{ ( x(1), \ldots, x(d) ) \in \mathbb{R}_+^d | \sum_{i = 1}^d x (d) = 1 }$ and 
\item the loss function $f_t$ are given by $f_t ( x ) \coloneqq - \log \braket{ a_t, x }$ for some $a_t \in \mathbb{R}_+^d$. 
\end{itemize} 
We assume that $\norm{ a_t }_\infty \leq 1$. 
If this is not the case, we can consider the sequence $( \tilde{a}_t )_{t \in \mathbb{N}}$ for $\tilde{a}_t \coloneqq a_t / \norm{ a_t }_\infty$ instead of $( a_t )_{t \in \mathbb{N}}$; 
the regret value is not affected. 

\paragraph{Interpretation} 
OPS characterizes a game of multi-round investment. 
Suppose there are $d$ assets and Learner has $w_1$ dollars. 
In the first round, Learner distributes the $w_1$ dollars to the assets following the fractions given by $x_1$.  
The vector $a_1$ denotes the price relatives of the $d$ assets in that round. 
Then, after the first round, Learner has $w_2 = w_1 \braket{ a_1, x_1 }$ dollars. 
It is easily seen that after $T$ rounds, we have 
\[
- \log \frac{w_{T + 1}}{w_1} = \sum_{t = 1}^{T + 1} f_t ( x_t ) . 
\]
A small cumulative loss implies a large wealth growth rate. 

\subsection{Quantum Generalization of OPS} \label{sec_quantum_generalization}

The following three concepts are needed to understand the quantum generalization
\begin{enumerate}
\item A quantum state is represented by a \emph{density matrix}, a Hermitian positive semi-definite matrix of unit trace. 
If there are $q$ qubits, then the dimension of the density matrix is $2^q \times 2^q$. 
The vector of eigenvalues of a density matrix is a vector in the probability simplex. 
\item A measurement is represented by a \emph{positive operator-valued measure (POVM)}, a set of Hermitian positive semi-definite matrices summing up to the identity matrix. 
If we consider $1$-by-$1$ matrices, i.e., real numbers, then a POVM also corresponds to a vector in the probability simplex. 
\item Let $\rho \in \mathbb{C}^{d \times d}$ be a density matrix and $\set{ M_1, \ldots, M_K } \subset \mathbb{C}^{d \times d}$ be a POVM. 
The \emph{measurement outcome} is a random variable $\eta$ taking values in $[ K ]$ such that 
\[
\mathsf{P} \left( \eta = k \right) = \tr ( M_k \rho ) , \quad \forall k \in [ K ] . 
\]
\end{enumerate}
For convenience, we denote the set of density matrices in $\mathbb{C}^{d \times d}$ by $\mathcal{D}_d$. 
We will denote density matrices by $\rho$ and $\sigma$ instead of $x$ and $y$, following the convention in quantum information. 

Let $d \in \mathbb{N}$. 
The quantum generalization of OPS corresponds to an online convex optimization problem where 
\begin{itemize}
\item the set $\mathcal{X}$ equals $\mathcal{D}_d$ and 
\item the loss function $f_t$ are given by $f_t ( \rho ) \coloneqq - \log \tr ( A_t \rho )$ for some $A_t \in \mathbb{S}^d_+$. 
\end{itemize}

The quantum setup is challenging due to the presence of non-commutativity. 
When all matrices involved share the same eigenbasis, one may ignore the eigenbasis and focus on the vectors of eigenvalues. 
Then, it is easily seen that the generalization becomes equivalent to OPS. 
The reader is referred to, e.g., \cite{Warmuth2006} and \cite{Arora2007}, for how the non-commutativity issue complicates the multiplicative weight update in the quantum case. 

\paragraph{Interpretation} 
Fix a POVM $\set{ M_1, \ldots, M_K } \subset \mathbb{C}^{d \times d}$. 
The class of all probability distributions on $[K]$ associated with the POVM, parameterized by $\mathcal{D}_d$, is given by
\[
\mathcal{P} \coloneqq \set{ ( \tr ( M_k \rho ) )_{k \in [ K ]} | \rho \in \mathcal{D}_d } . 
\]
Consider the following problem of probability forecasting with the logarithmic loss \citep[Chapter 9]{Cesa-Bianchi2006}. 
In the $t$-th round, Learner announces a probability distribution $p_t = ( p_t ( k ) )_{k \in [K]} \in \mathcal{P}$; 
then, Reality announces some $\eta_t \in [ K ]$; 
the loss suffered by Learner is given by $- \log p_t ( \eta_t ) $. 
It is easily checked that the probability forecasting problem is equivalent to the quantum generalization of OPS. 
Another closely related interpretation, based on maximum-likelihood quantum state tomography, was considered by \cite{Lin2021b}. 

\subsection{Online Mirror Descent}

We focus on OMD in this paper.
Similar results to those in this paper can be derived also for FTRL with linearized losses.  
Consider the general online convex optimization problem in Section \ref{sec_oco}. 
Let $h$ be a Legendre function such that the closure of $\dom h$ contains $\mathcal{X}$. 
Then, $h$ is differentiable on $\inte \, \dom h$. 
Define the associated Bregman divergence as 
\begin{equation}
D_h ( x, y ) \coloneqq h ( x ) - h ( y ) - \braket{ \nabla h ( y ), x - y } , \quad \forall ( x, y ) \in \dom h \times \inte \, \dom h . \label{eq_bregman_divergence}
\end{equation}
OMD with $h$ iterates as the following. 
\begin{itemize}
\item Let $x_1 \in \mathcal{X} \cap \inte \dom h$. 
\item For each $t \in \mathbb{N}$, compute 
\begin{equation}
x_{t + 1} = \argmin_{x \in \mathcal{X}} \eta \braket{ \nabla f_t ( x_t ), x - x_t } + D_h ( x, x_t ) \label{eq_md}
\end{equation}
for some learning rate $\eta > 0$. 
\end{itemize}
For OMD to be well defined, we always assume that $\nabla f_t ( x_t )$ exists for all $t \in \mathbb{N}$. 

Below are two famous instances of OMD. 
\begin{itemize}
\item OMD becomes online gradient descent when $h = (1/2) \norm{ \cdot }_2^2$. 
\item OMD becomes exponentiated gradient when $h$ is the negative Shannon entropy and $\mathcal{X}$ is the probability simplex. 
\end{itemize}

\subsection{Function Class}

We will consider the class of relatively smooth self-concordant barrier loss functions. 

\begin{definition}[\citet{Bauschke2017,Lu2018}]
Let $\mathcal{X}$ be a convex set in $\mathbb{R}^d$. 
We say that a convex function $f$ is $L$-smooth relative to a differentiable convex function $h$ on $\mathcal{X}$ for some $L > 0$ if the function $L h - f$ is convex on $\mathcal{X}$. 
\end{definition}

Relative smoothness becomes smoothness when $h = (1 / 2) \norm{ \cdot }_2^2$. 
Since the gradient of a convex function is a monotone mapping, the following lemma immediately follows. 

\begin{lemma} \label{lem_relative_smoothness}
Let $f$ be a convex function $L$-smooth relative to a differentiable convex function $h$ on $\mathcal{X}$ for some $L > 0$. 
Then, 
\[
L \braket{ \nabla h ( y ) - \nabla h ( x ), y - x } \geq \braket{ \nabla f ( y ) - \nabla f ( x ), y - x } , \quad \forall x, y \in \mathcal{X} . 
\]
\end{lemma}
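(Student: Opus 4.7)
The plan is to derive the inequality as a direct consequence of the fact that the gradient of any differentiable convex function is a monotone operator, applied to the auxiliary function $g \coloneqq Lh - f$. By the definition of relative smoothness, $g$ is convex on $\mathcal{X}$, and since both $h$ and $f$ are differentiable, $g$ is differentiable with $\nabla g = L \nabla h - \nabla f$.

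First I would invoke the standard characterization that a differentiable function $g$ is convex on $\mathcal{X}$ if and only if its gradient is monotone, i.e.,
\[
\braket{ \nabla g ( y ) - \nabla g ( x ), y - x } \geq 0 , \quad \forall x, y \in \mathcal{X} .
\]
Substituting $\nabla g = L \nabla h - \nabla f$ into this inequality and rearranging terms yields exactly the claimed bound. That is the entire argument; the lemma is essentially a restatement of monotonicity for the function $L h - f$ guaranteed to be convex by hypothesis.

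There is really no obstacle here beyond making sure that $\nabla f$ is well defined on $\mathcal{X}$, which is implicit in the statement (otherwise the inequality would have to be phrased in terms of subgradients, in which case the same proof goes through by picking any subgradient selections and using monotonicity of the subdifferential of the convex function $Lh - f$). I would simply present the two-line computation and note the monotonicity fact being used.
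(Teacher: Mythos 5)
Your proof is correct and matches the paper's argument exactly: the paper also derives the lemma immediately from the monotonicity of the gradient of the convex function $Lh - f$. Nothing to add.
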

%


\begin{definition}[\citet{Nesterov2018a}]
We say a function $f$ is $M_f$-self-concordant for some $M_f \geq 0$ if
\[
\abs{ D^3 f ( x ) [ u, u, u ] } \leq 2 M_f \braket{ u, \nabla ^ 2 f ( x ) u } ^{3 / 2} , \quad \forall x \in \dom f, u \in \mathbb{R}^d , 
\]
where 
\[
D^3 f ( x ) [ u, u, u ] \coloneqq \left. \frac{\du^3 f}{\du t^3} ( x + t u ) \right\vert_{t = 0} . 
\]
\end{definition}

For a self-concordant function, the monomonicity of its gradient mapping can be strengthened \citep[Theorem 5.1.8]{Nesterov2018a}. 

\begin{lemma} \label{lem_self_concordance}
Let $f$ be an $M_f$-self-concordant function for some $M_f \geq 0$. 
Then, 
\[
\braket{ \nabla f ( y ) - \nabla f ( x ), y - x } \geq \frac{ \norm{ y - x }_x^2 }{ 1 + M_f \norm{ y - x }_x } , \quad \forall x, y \in \dom f , 
\]
where $\norm{ y - x }_x \coloneqq \braket{ y - x, \nabla^2 f ( x ) ( y - x ) }^{1/2}$. 
\end{lemma}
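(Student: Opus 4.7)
The plan is to reduce the inequality to a one-dimensional estimate along the segment from $x$ to $y$, and then exploit self-concordance to control how the local Hessian norm evolves along that segment.

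First, I would parametrize the segment by setting $u = y - x$ and defining $\varphi(t) \coloneqq \braket{ \nabla f(x + t u) - \nabla f(x), u }$ for $t \in [0,1]$, so that $\varphi(1)$ is exactly the left-hand side. Since the segment lies in $\dom f$ by convexity of the domain, the fundamental theorem of calculus gives
\[
\varphi(1) = \int_0^1 \braket{ u, \nabla^2 f(x + t u) u } \du t = \int_0^1 \phi(t) \du t,
\]
where $\phi(t) \coloneqq \norm{u}_{x+tu}^2$. The goal then is to lower-bound $\phi(t)$ from below using the self-concordance hypothesis.

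Next, observe that $\phi'(t) = D^3 f(x + tu)[u,u,u]$, so the self-concordance inequality gives $\abs{\phi'(t)} \leq 2 M_f \phi(t)^{3/2}$. This is equivalent to the bound $\bigl| \frac{\du}{\du t} \phi(t)^{-1/2} \bigr| \leq M_f$ (whenever $\phi(t) > 0$; the degenerate case $\phi \equiv 0$ makes the claim trivial). Integrating from $0$ to $t$ yields $\phi(t)^{-1/2} \leq \phi(0)^{-1/2} + M_f t$, and with $r \coloneqq \norm{u}_x = \phi(0)^{1/2}$ this rearranges to
\[
\phi(t) \geq \frac{r^2}{(1 + M_f r t)^2}.
\]

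Finally, I would plug this into the integral representation of $\varphi(1)$ and compute:
\[
\varphi(1) \geq \int_0^1 \frac{r^2}{(1 + M_f r t)^2} \du t = \frac{r}{M_f}\left(1 - \frac{1}{1 + M_f r}\right) = \frac{r^2}{1 + M_f r},
\]
which is the claimed bound. The main obstacle is really just step three, handling the sign of $\phi'$ correctly to pass from the self-concordance inequality to a one-sided monotone bound on $\phi(t)^{-1/2}$; everything else is either setup or a routine integration. The case $M_f = 0$ and the degenerate case $\phi(0) = 0$ should be checked separately, but both reduce to standard monotonicity of the gradient of a convex function.
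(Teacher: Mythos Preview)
Your argument is correct and is in fact the standard proof of this result. The paper does not give its own proof of this lemma; it simply cites \citet[Theorem 5.1.8]{Nesterov2018a}, and what you have written is essentially the proof found there.
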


\begin{definition}[\citet{Nesterov2018a}] \label{def_barrier}
We say a convex function $f$ is a $\nu$-self-concordant barrier if it is $1$-self-concordant and 
\[
\braket{ \nabla f ( x ), u } ^ 2 \leq \nu \braket{ u, \nabla ^ 2 f ( x ) u } , \quad \forall x \in \dom f, u \in \mathbb{R}^d . 
\]
\end{definition}

We now verify that the loss functions in OPS and its quantum generalization are relatively smooth self-concordant barriers. 
The following proposition is already proved by, e.g., \citet[Lemma 7]{Bauschke2017} and \citet[Example 5.3.1]{Nesterov2018a}. 

\begin{proposition} \label{pro_loss_classical}
Consider the function $f ( x ) = - \log \braket{ a, x }$ for some $a \in \mathbb{R}_+^d$, $a \neq 0$. 
\begin{enumerate}
\item The function is $1$-smooth relative to the logarithmic barrier on $\mathbb{R}_{++}^d$. 
\item The function is a $1$-self-concordant barrier. 
\end{enumerate}
\end{proposition}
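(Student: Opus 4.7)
My plan is to verify both claims by direct calculation, reducing the matrix inequalities to easy scalar ones. On $\mathbb{R}_{++}^d$ the logarithmic barrier is $h(x) = -\sum_{i=1}^d \log x(i)$, so $\nabla^2 h(x) = \operatorname{diag}(1/x(1)^2, \ldots, 1/x(d)^2)$, while for $f(x) = -\log \braket{a, x}$ the chain rule yields $\nabla f(x) = -a/\braket{a,x}$ and $\nabla^2 f(x) = a a^\top/\braket{a,x}^2$.

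For part~(1), since $L = 1$ I need to show $\nabla^2 h(x) - \nabla^2 f(x) \succeq 0$, i.e.\ for every $u \in \mathbb{R}^d$,
\[
\left(\frac{\braket{a, u}}{\braket{a, x}}\right)^2 \leq \sum_{i=1}^d \frac{u(i)^2}{x(i)^2}.
\]
The natural move is the substitution $p_i := a(i) x(i)/\braket{a, x}$, which is a probability distribution since $a \neq 0$ and $x \in \mathbb{R}_{++}^d$. Then $\braket{a, u}/\braket{a, x} = \sum_i p_i \cdot u(i)/x(i)$, and Jensen's inequality applied to $s \mapsto s^2$ together with $p_i \leq 1$ yields
\[
\left(\sum_i p_i \frac{u(i)}{x(i)}\right)^2 \leq \sum_i p_i \frac{u(i)^2}{x(i)^2} \leq \sum_i \frac{u(i)^2}{x(i)^2},
\]
which is the desired pointwise bound. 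Convexity of $h - f$ on $\mathbb{R}_{++}^d$ follows.

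For part~(2), both the self-concordance inequality and the barrier inequality reduce to one-dimensional log-barrier calculations. Fix $x \in \dom f$ and $u \in \mathbb{R}^d$, and set $\alpha := \braket{a, x} > 0$, $\beta := \braket{a, u}$. Then $f(x + tu) = -\log(\alpha + t\beta)$, so differentiating at $t = 0$ gives
\[
\braket{\nabla f(x), u} = -\frac{\beta}{\alpha}, \qquad \braket{u, \nabla^2 f(x) u} = \frac{\beta^2}{\alpha^2}, \qquad D^3 f(x)[u,u,u] = -\frac{2\beta^3}{\alpha^3}.
\]
Consequently $\lvert D^3 f(x)[u,u,u] \rvert = 2(\beta^2/\alpha^2)^{3/2} = 2 \braket{u, \nabla^2 f(x) u}^{3/2}$, establishing $M_f = 1$; and $\braket{\nabla f(x), u}^2 = \beta^2/\alpha^2 = \braket{u, \nabla^2 f(x) u}$, giving barrier parameter $\nu = 1$ (with equality, in fact).

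Honestly, there is no serious obstacle here: the statement is just a repackaging of the fact that $-\log$ is the canonical one-dimensional $1$-self-concordant barrier, composed with an affine map. The only step that requires even a pinch of thought is the Jensen step in part~(1), where one must spot the right weights $p_i = a(i) x(i)/\braket{a,x}$ to convert the rank-one Hessian $a a^\top/\braket{a,x}^2$ into a diagonal-dominant bound.
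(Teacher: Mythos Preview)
Your proof is correct. The paper does not actually supply its own proof of this proposition---it simply cites \citet[Lemma~7]{Bauschke2017} and \citet[Example~5.3.1]{Nesterov2018a}---so strictly speaking there is nothing to compare against. That said, your arguments are exactly in the spirit of what the paper does elsewhere: your Jensen step with weights $p_i = a(i)x(i)/\braket{a,x}$ is precisely the computation the paper carries out in Appendix~\ref{app_relative_smoothness} (for relative smoothness with respect to the negative Shannon entropy, where the final bound is $G \sum_i v(i)^2/x(i)$ rather than $\sum_i v(i)^2/x(i)^2$), and your direct derivative calculation for part~(2) matches the one the paper gives in Appendix~\ref{app_loss_quantum} for the quantum analogue.
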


The proof of the following proposition is deferred to Appendix \ref{app_loss_quantum}. 

\begin{proposition} \label{pro_loss_quantum}
Consider the function $f ( \rho ) \coloneqq - \log \tr ( A \rho )$ for some $A \in \mathbb{S}_+^d$, $A \neq 0$. 
\begin{enumerate}
\item The function is $1$-smooth relative to the log-det function on $\ri \mathcal{D}_d$. 
\item The function is a $1$-self-concordant barrier. 
\end{enumerate}
\end{proposition}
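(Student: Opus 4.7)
The plan is to parameterize $f$ along a line and compute its first three derivatives directly, since $f(\rho + t U) = -\log\bigl(\tr(A\rho) + t\tr(A U)\bigr)$ depends on $t$ only through the scalar $\phi(t) = \tr(A\rho) + t \tr(A U)$. Setting $a = \tr(A\rho) > 0$ and $b = \tr(A U)$, a routine computation gives
\[
D^2 f(\rho)[U,U] = \frac{b^2}{a^2}, \qquad D^3 f(\rho)[U,U,U] = -\frac{2 b^3}{a^3},
\]
from which the identity $|D^3 f(\rho)[U,U,U]| = 2\bigl(D^2 f(\rho)[U,U]\bigr)^{3/2}$ is immediate, establishing $1$-self-concordance. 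For the barrier inequality, the same computation shows $\langle \nabla f(\rho), U\rangle^2 = b^2/a^2 = \langle U, \nabla^2 f(\rho) U\rangle$ with equality, so $\nu = 1$ trivially works. This settles item~2.

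For item~1, write $h(\rho) = -\log\det\rho$, whose Hessian satisfies $\langle U, \nabla^2 h(\rho) U\rangle = \tr(\rho^{-1} U \rho^{-1} U)$. Relative smoothness with $L=1$ is equivalent to showing that $h - f$ is convex, i.e.
\[
\tr(\rho^{-1} U \rho^{-1} U) \geq \frac{(\tr(A U))^2}{(\tr(A\rho))^2} \qquad \forall\, \rho \in \ri\mathcal{D}_d,\; U \in \mathbb{S}^d.
\]
I would prove the equivalent inequality $(\tr(A\rho))^2 \tr(\rho^{-1} U \rho^{-1} U) \geq (\tr(A U))^2$ in two steps. First, rewrite the linear form as $\tr(A U) = \tr\bigl(\rho^{1/2} A \rho^{1/2} \cdot \rho^{-1/2} U \rho^{-1/2}\bigr)$ and apply the Hilbert--Schmidt Cauchy--Schwarz inequality to obtain
\[
(\tr(A U))^2 \leq \tr\bigl((\rho^{1/2} A \rho^{1/2})^2\bigr) \cdot \tr(\rho^{-1} U \rho^{-1} U).
\]
Second, set $B \coloneqq \rho^{1/2} A \rho^{1/2} \in \mathbb{S}_+^d$; then $\tr(B^2) = \sum_i \lambda_i(B)^2 \leq \bigl(\sum_i \lambda_i(B)\bigr)^2 = (\tr B)^2 = (\tr(A\rho))^2$. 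Chaining these two bounds yields the desired inequality.

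The only nontrivial step is selecting the right splitting for Cauchy--Schwarz so that the quadratic form in $U$ matches $\nabla^2 h$ and the remaining factor collapses to $(\tr(A\rho))^2$; the split $X = \rho^{1/2} A \rho^{1/2}$, $Y = \rho^{-1/2} U \rho^{-1/2}$ is essentially forced by this requirement. Once that is in place, the rest is elementary. Note that both the self-concordance and the relative-smoothness arguments go through verbatim for $A \neq 0$ provided $\tr(A\rho) > 0$, which holds on $\ri \mathcal{D}_d$ since $\rho \succ 0$ and $A \in \mathbb{S}_+^d \setminus \{0\}$.
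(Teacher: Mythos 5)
Your proposal is correct and follows essentially the same route as the paper: item~2 by the same direct computation of the first three directional derivatives, and item~1 by reducing to $(\tr(A U))^2 \leq (\tr(A\rho))^2\,\tr\bigl((\rho^{-1/2}U\rho^{-1/2})^2\bigr)$ via the identical splitting $\rho^{1/2}A\rho^{1/2}$, $\rho^{-1/2}U\rho^{-1/2}$ and the same final bound $\tr(B^2)\leq(\tr B)^2$ for $B\succeq 0$. The only (harmless) difference is that you invoke the ordinary Cauchy--Schwarz inequality for the Hilbert--Schmidt inner product of the two Hermitian factors, which slightly streamlines the paper's chain $(\tr(KL))^2\leq(\tr\abs{KL})^2\leq\tr(\abs{K}^2)\tr(\abs{L}^2)$ through Weyl's majorant theorem and the matrix Cauchy--Schwarz inequality.
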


\section{Main Results}

The results mentioned in Section \ref{sec_intro} are presented in this section. 
Notice that there is a slight abuse of notations. 
The loss functions are always denoted as $f_t$, though they differ in different subsections; 
for OPS, the iterates are always denoted as as $x_t$, though the iterates are generated by different algorithms in different subsections. 
For the quantum generalization of OPS, the iterates are denoted as $\rho_t$ instead of $x_t$, following the convention of the quantum information community. 

\subsection{Online Self-Concordant and Relatively Smooth Minimization}

Our study of OPS and its quantum generalization is based on the following general result. 

\begin{theorem} \label{thm_main}
Consider an online convex optimization problem where the loss functions are $1$-self-concordant barriers and $L$-smooth relative to a Legendre function $h$. 
Then, OMD with $h$ and learning rate $\eta \in ( 0, 1 / L )$ achieves
\[
R_T ( x ) \leq \frac{D_h ( x, x_1 )}{\eta} + \frac{T L \eta}{1 - \added{L}{\eta}} . 
\]
\end{theorem}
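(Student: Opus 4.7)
The plan is to decompose the regret as
\[ R_T(x) = \sum_{t=1}^T \bigl[f_t(x_{t+1}) - f_t(x)\bigr] + \sum_{t=1}^T \bigl[f_t(x_t) - f_t(x_{t+1})\bigr] \]
and to handle the two sums differently. The first sum would be handled by a standard telescoping analysis of OMD driven only by relative smoothness, yielding the $D_h(x, x_1)/\eta$ term. The second sum, which would be trivially small for Lipschitz losses, is the delicate piece here, and it would be handled by a per-round stability bound of order $L\eta/(1-L\eta)$ that exploits both the self-concordance and the self-concordant barrier property of each loss.

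For the telescoping part, I would begin from convexity, $f_t(x_t) - f_t(x) \le \braket{\nabla f_t(x_t), x_t - x}$, and split the inner product across $x_{t+1}$. Relative smoothness in its descent-lemma form, $f_t(y) \le f_t(x) + \braket{\nabla f_t(x), y - x} + L D_h(y, x)$, bounds $\braket{\nabla f_t(x_t), x_t - x_{t+1}}$ by $f_t(x_t) - f_t(x_{t+1}) + L D_h(x_{t+1}, x_t)$, while the first-order optimality of the OMD update combined with the three-point identity for Bregman divergences bounds $\eta \braket{\nabla f_t(x_t), x_{t+1} - x}$ by $D_h(x, x_t) - D_h(x, x_{t+1}) - D_h(x_{t+1}, x_t)$. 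Rearranging gives
\[ \eta \bigl[f_t(x_{t+1}) - f_t(x)\bigr] \le D_h(x, x_t) - D_h(x, x_{t+1}) - (1 - L\eta) D_h(x_{t+1}, x_t), \]
whose residual Bregman term is nonpositive because $\eta < 1/L$; summing telescopes to $\sum_t \bigl[f_t(x_{t+1}) - f_t(x)\bigr] \le D_h(x, x_1)/\eta$.

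The heart of the argument is the one-step stability bound $f_t(x_t) - f_t(x_{t+1}) \le L\eta/(1-L\eta)$. Convexity together with Definition~\ref{def_barrier} (with $\nu = 1$) reduces this to showing that $r \coloneqq \|x_t - x_{t+1}\|_{x_t} \le L\eta/(1-L\eta)$. To get such a bound I would chain Lemma~\ref{lem_self_concordance} (with $M_f = 1$, at the pair $x_t, x_{t+1}$), Lemma~\ref{lem_relative_smoothness}, the OMD optimality condition tested at $x = x_t$, and the barrier inequality once more, producing
\[ \frac{r^2}{1 + r} \le \braket{\nabla f_t(x_{t+1}) - \nabla f_t(x_t), x_{t+1} - x_t} \le L \braket{\nabla h(x_{t+1}) - \nabla h(x_t), x_{t+1} - x_t} \le L\eta \braket{\nabla f_t(x_t), x_t - x_{t+1}} \le L\eta\, r. \]
Solving the scalar inequality $r^2/(1+r) \le L\eta\,r$ yields $r \le L\eta/(1-L\eta)$, and summing over $t$ closes the proof. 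The main obstacle is isolating precisely this chain: because the losses are non-Lipschitz, the natural local norm is the one induced by $\nabla^2 f_t(x_t)$ rather than by $\nabla^2 h$, and only the specific interleaving of self-concordance, relative smoothness, OMD optimality, and the barrier inequality closes a self-referential estimate for $r$.
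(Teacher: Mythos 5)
Your proposal is correct and follows essentially the same route as the paper: the same split into a one-step-look-ahead regret (handled by the relative-smoothness descent lemma plus the Bregman proximal/three-point inequality and telescoping) and a per-round stability term, with the identical chain of inequalities — barrier property, OMD optimality, relative smoothness, self-concordance — closing the self-referential bound $r_t \le L\eta/(1-L\eta)$ in the local norm of $\nabla^2 f_t(x_t)$. No gaps.
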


The proof of Theorem \ref{thm_main} is short, so we present it here. 
We start with the following lemma. 
The lemma should be familiar to convex optimization experts. 
Indeed, the lemma is obtained by simply adding time indices to the objective function in, e.g., the analysis by \citet{Lu2018,Teboulle2018} for mirror descent minimizing convex relatively smooth functions. 
We provide a proof in Appendix \ref{app_md} for completeness.

\begin{lemma} \label{lem_md}
It holds that 
\[
\sum_{t = 1}^T f_t ( x_{t + 1} ) - \sum_{t = 1}^T f_t ( x ) \leq \frac{D_h ( x, x_1 )}{\eta} . 
\]
\end{lemma}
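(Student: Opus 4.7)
The plan is the standard mirror-descent analysis via the three-point Bregman identity, upgraded by the descent lemma that $L$-relative smoothness provides. The reason the right-hand side of the lemma contains no $L$-dependent term is that the excess Bregman term produced by the descent lemma is exactly cancelled by the negative Bregman term produced by the three-point identity once $\eta \leq 1/L$.

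First, I would write down the first-order optimality condition for the update~\eqref{eq_md} and pair it with the three-point identity for $D_h$. Together these yield the per-round inequality
\[
\eta \braket{ \nabla f_t(x_t),\, x_{t+1} - x } \leq D_h(x, x_t) - D_h(x, x_{t+1}) - D_h(x_{t+1}, x_t).
\]

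Second, I would convert the linear term $\braket{ \nabla f_t(x_t),\, x_{t+1} - x }$ into the functional gap $f_t(x_{t+1}) - f_t(x)$. This takes one application of convexity of $f_t$ at $x_t$ evaluated at $x$ (contributing nothing extra), and one application of the descent lemma for $L$-smoothness relative to $h$ at $x_t$ evaluated at $x_{t+1}$ (contributing $+ L\, D_h(x_{t+1}, x_t)$ on the right-hand side). The descent lemma itself is immediate from the convexity of $L h - f_t$ via one more subgradient inequality, and does not require any hypothesis beyond the definition of relative smoothness used in the excerpt.

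Combining the two steps yields
\[
\eta [\, f_t(x_{t+1}) - f_t(x) \,] \leq D_h(x, x_t) - D_h(x, x_{t+1}) + (L\eta - 1)\, D_h(x_{t+1}, x_t),
\]
and the standing hypothesis $\eta < 1/L$ of Theorem~\ref{thm_main} forces the last term to be non-positive and thus droppable. Summing over $t = 1, \dots, T$ telescopes the right-hand side down to $D_h(x, x_1) - D_h(x, x_{T+1}) \leq D_h(x, x_1)$, and dividing by $\eta$ concludes the proof. I do not anticipate a real obstacle; the only care needed is that each $x_t$ lies in $\inte \dom h$ so that $\nabla h(x_t)$ is defined and the three-point identity is legitimate, and this is guaranteed by the Legendre assumption on $h$ together with the initialization and update rule prescribed in \eqref{eq_md}.
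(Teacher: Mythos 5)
Your proposal is correct and follows essentially the same route as the paper's proof in Appendix~\ref{app_md}: the relative-smoothness descent lemma plus convexity of $f_t$ to bound $f_t(x_{t+1}) - f_t(x)$ by $\braket{\nabla f_t(x_t), x_{t+1} - x} + L D_h(x_{t+1}, x_t)$, the Bregman proximal (three-point) inequality from the optimality condition of~\eqref{eq_md}, cancellation of the $D_h(x_{t+1}, x_t)$ term via $\eta < 1/L$, and a telescoping sum. No gaps.
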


Notice that $f_t ( x_{t + 1} )$ is not realizable because computing $x_{t + 1}$ needs $\nabla f_t$. 
We then bound the difference between the actual regret and the ``one-step-look-ahead'' regret in Lemma \ref{lem_md}. 

\begin{lemma} \label{lem_difference}
It holds that 
\[
\sum_{t = 1}^T f_t ( x_t ) - \sum_{t = 1}^T f_t ( x_{t + 1} ) \leq \frac{T L \eta}{1 - L \eta} . 
\]
\end{lemma}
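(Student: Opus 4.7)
The plan is to bound each per-round gap $f_t(x_t) - f_t(x_{t+1})$ by $L\eta/(1 - L\eta)$; summing over $t$ gives the stated bound. The core quantity to control will be the local norm $r_t \coloneqq \lVert x_{t+1} - x_t \rVert_{x_t}$, where $\lVert \cdot \rVert_{x_t}$ is defined with respect to $\nabla^2 f_t(x_t)$.

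First, I would use convexity of $f_t$ together with the $1$-self-concordant barrier property (Definition \ref{def_barrier} with $\nu = 1$) applied at $x_t$ and $u = x_t - x_{t+1}$. This yields
\[
f_t(x_t) - f_t(x_{t+1}) \leq \langle \nabla f_t(x_t), x_t - x_{t+1} \rangle \leq \lVert x_t - x_{t+1} \rVert_{x_t} = r_t,
\]
so the task reduces to proving $r_t \leq L\eta/(1 - L\eta)$.

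Next, I would chain together three inputs. The first-order optimality condition for the OMD update \eqref{eq_md} (setting the competitor to $x_t$) gives
\[
\eta \langle \nabla f_t(x_t), x_t - x_{t+1} \rangle \geq \langle \nabla h(x_t) - \nabla h(x_{t+1}), x_t - x_{t+1} \rangle,
\]
which is exactly the symmetrized Bregman quantity $D_h(x_t, x_{t+1}) + D_h(x_{t+1}, x_t)$. Relative smoothness (Lemma \ref{lem_relative_smoothness}) with the roles $y = x_t$, $x = x_{t+1}$ lets me replace the right-hand side by $\frac{1}{L} \langle \nabla f_t(x_t) - \nabla f_t(x_{t+1}), x_t - x_{t+1} \rangle$, and self-concordance (Lemma \ref{lem_self_concordance} with $M_f = 1$) bounds this monotonicity quantity from below by $r_t^2/(1 + r_t)$. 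Combining and using the self-concordant barrier bound $\langle \nabla f_t(x_t), x_t - x_{t+1} \rangle \leq r_t$ on the left, I obtain
\[
L \eta \, r_t \geq \frac{r_t^2}{1 + r_t},
\]
which rearranges to $r_t (1 - L\eta) \leq L\eta$, i.e. $r_t \leq L\eta/(1 - L\eta)$, using $\eta < 1/L$.

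The main obstacle is psychological rather than technical: one has to notice that the self-concordant barrier inequality $\lvert \langle \nabla f_t(x_t), u \rangle \rvert \leq \lVert u \rVert_{x_t}$ is what converts the ``directional derivative'' quantity on the left of the optimality inequality into the local-norm quantity $r_t$, so that self-concordance of $f_t$ can be used on both sides of one inequality to close the loop on $r_t$. Once that coupling is spotted, the calculation is a one-line rearrangement, and summing over $t = 1, \dots, T$ completes the proof.
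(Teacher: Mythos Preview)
Your proof is correct and essentially identical to the paper's: both define $r_t = \lVert x_{t+1} - x_t \rVert_{x_t}$ with respect to $\nabla^2 f_t(x_t)$, chain the self-concordant barrier bound, the OMD optimality condition, relative smoothness (Lemma~\ref{lem_relative_smoothness}), and the self-concordance inequality (Lemma~\ref{lem_self_concordance}) to obtain $L\eta\, r_t \geq r_t^2/(1+r_t)$, solve for $r_t \leq L\eta/(1-L\eta)$, and then conclude via $f_t(x_t) - f_t(x_{t+1}) \leq r_t$ and summation. The only cosmetic difference is the order of presentation.
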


\begin{proof}
Define the local norm $\norm{ u }_t \coloneqq \braket{ u, \nabla^2 f_t ( x_t ) u }^{1/2}$ and $r_t \coloneqq \norm{ x_t - x_{t + 1} }_t$. 
By the optimality condition of \eqref{eq_md}, we have  
\begin{equation}
\braket{ \eta \nabla f_t ( x_t ) + \nabla h ( x_{t + 1} ) - \nabla h ( x_t ), x_t - x_{t + 1} } \geq 0 . \label{eq_optimality_condition}
\end{equation}
Then, we write 
\begin{align*}
r_t & \geq \braket{ \nabla f_t ( x_t ), x_t - x_{t + 1} } \\
& \geq \frac{1}{\eta} \braket{ \nabla h( x_{t + 1} ) - \nabla h ( x_t ), x_{t + 1} - x_t } \\
& \geq \frac{1}{L \eta} \braket{ \nabla f_t ( x_{t + 1} ) - \nabla f_t ( x_t ), x_{t + 1} - x_t } \\
& \geq \frac{r_t^2}{L \eta \left( 1 + r_t \right)} . 
\end{align*}
In the above, the first inequality follows from the definition of a self-concordant barrier (Definition \ref{def_barrier}); 
the second follows by rearranging \eqref{eq_optimality_condition}; 
the third follows from the relative smoothness of $f$; 
the fourth follows from Lemma \ref{lem_self_concordance}. 
Solving the inequality for $r_t$, we get 
\[
r_t \leq \frac{L \eta}{1 - L \eta} . 
\]
Then, we write 
\[
f_t ( x_{t} ) - f_t ( x_{t+1} ) \leq \braket{ \nabla f_t ( x_t ), x_t - x_{t + 1} } \leq r_t \leq \frac{L \eta}{1 - L \eta} . 
\]
The lemma follows by summing the inequality from $t = 1$ to $t = T$.
\end{proof}

Combining Lemma \ref{lem_md} and Lemma \ref{lem_difference}, Theorem \ref{thm_main} follows.   

\subsection{\tildeEG{}}

The original exponentiated gradient update requires the loss functions to be Lipschitz to achieve a sublinear regret. 
The Lipschitz assumption, as discussed in Section \ref{sec_intro}, does not hold in OPS and hence also its quantum generalization. 
\citet{Helmbold1998} proposed a variant of the exponentiated gradient update, named \tildeEG{}, that gets rid of the Lipschitz assumption in OPS. 
The \tildeEG{} algorithm is presented in Algorithm \ref{alg_tEG}, where $e \coloneqq ( 1, \ldots, 1 ) \in \mathbb{R}^d$ and $D_h$ denotes the Bregman divergence defined by $h$ \eqref{eq_bregman_divergence}.  
Notice that $\hat{x}_{t + 1}$ admits the closed-form expression
\[
\hat{x}_{t + 1} ( i ) \propto \hat{x}_t ( i ) \, \eu^{- \eta \nabla_i \hat{f}_t ( \hat{x}_t )} , \quad \forall i \in [d] , 
\]
where $\nabla_i \hat{f}_t ( \hat{x}_t )$ denotes the $i$-th entry of $\nabla \hat{f}_t ( \hat{x}_t )$. 

\begin{algorithm}[ht] 
\caption{\tildeEG{} for OPS.} \label{alg_tEG}
\hspace*{\algorithmicindent} \textbf{Input: } $\eta > 0$, $\gamma \in ( 0, 1 )$. 
\begin{algorithmic}[1]
\STATE $x_1 \leftarrow e / d$. 
\STATE $\hat{x}_1 \leftarrow e / d$. 
\STATE $h ( x ) \coloneqq \sum_{i = 1}^d x ( i ) \log x ( i ) - \sum_{i = 1}^d x ( i )$. 
\FORALL{$t \in \mathbb{N}$}
	\STATE $\hat{a}_t \leftarrow ( 1 - \gamma / d ) a_t + ( \gamma / d ) e$. 
	\STATE $\hat{f}_t ( x ) \coloneqq - \log \braket{ \hat{a}_t, x }$. 
	\STATE $\hat{x}_{t + 1} \leftarrow \argmin_{x \in \Delta} \eta \braket{ \nabla \hat{f}_t ( \hat{x}_t ), x - \hat{x}_t } + D_h ( x, \hat{x}_t )$. 
	\STATE $x_{t + 1} \leftarrow ( 1 - \gamma ) \hat{x}_{t + 1} + ( \gamma / d ) e$. 
\ENDFOR
\end{algorithmic}
\end{algorithm}


\begin{theorem} \label{thm_eg}
Suppose that $T > 4 d / \log d$. 
Then, setting 
\[
\gamma = \frac{2^{2/3} d^{1/3}}{( T \log d )^{1/3}} \quad \text{and} \quad \eta = \frac{\gamma \sqrt{d}}{\sqrt{T d \gamma} + d \sqrt{ \log d }} , 
\]
\tildeEG{} for OPS satisfies 
\begin{align*}
R_T & \leq 2^{5/3}T^{2/3}d^{1/3}(\log d)^{2/3} + 2^{-2/3}T^{1/3} d^{2/3} (\log d)^{4/3} \\
& = \tilde{O} ( T^{2/3}d^{1/3} ) . 
\end{align*}
\end{theorem}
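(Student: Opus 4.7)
The plan is to view Algorithm~\ref{alg_tEG} as plain online mirror descent on the \emph{ghost} losses $\hat{f}_t(x) \coloneqq -\log\braket{\hat{a}_t, x}$ with regularizer $h$, apply Theorem~\ref{thm_main} to this ghost problem, and then convert the ghost regret into a bound on the true $R_T$ by accounting for the two mixing steps of Algorithm~\ref{alg_tEG}. Rescaling if necessary, I assume throughout that $\norm{a_t}_\infty = 1$, and in particular $\norm{a_t}_1 \geq 1$; this does not affect the regret.

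The first substantive step is to verify the hypotheses of Theorem~\ref{thm_main} for the ghost losses. Since $\hat{a}_t \in \mathbb{R}_+^d \setminus \{0\}$, Proposition~\ref{pro_loss_classical} immediately shows each $\hat{f}_t$ is a $1$-self-concordant barrier. For relative smoothness I compute $\nabla^2 \hat{f}_t(x) = \hat{a}_t \hat{a}_t^\top / \braket{\hat{a}_t, x}^2$ and $\nabla^2 h(x) = \mathrm{diag}(1/x(i))$; the Cauchy--Schwarz inequality then gives
\[
\braket{v, \nabla^2 \hat{f}_t(x) v} \;\leq\; \frac{1}{\braket{\hat{a}_t, x}}\sum_{i=1}^d \frac{\hat{a}_t(i)\, v(i)^2}{x(i)}.
\]
Because $\hat{a}_t(i) \leq 1$ and $\braket{\hat{a}_t, x} \geq \gamma/d$ on $\Delta$ (the latter using $\hat{a}_t \geq (\gamma/d) e$), this shows $\hat{f}_t$ is $L$-smooth relative to $h$ with $L = d/\gamma$. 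Combined with the easy bound $D_h(\tilde{u}, e/d) \leq \log d$ valid for every $\tilde{u} \in \Delta$, Theorem~\ref{thm_main} yields the ghost regret bound
\[
\sum_{t=1}^T \bigl[ \hat{f}_t(\hat{x}_t) - \hat{f}_t(\tilde{u}) \bigr] \;\leq\; \frac{\log d}{\eta} + \frac{T L \eta}{1 - L\eta}.
\]

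Next I transfer this into a bound on the true $R_T$. Expanding $\braket{a_t, x_t}$ via $x_t = (1-\gamma)\hat{x}_t + (\gamma/d) e$ and $\braket{\hat{a}_t, \hat{x}_t}$ via $\hat{a}_t = (1-\gamma/d) a_t + (\gamma/d) e$, and invoking $\norm{a_t}_1 \geq 1$, a short algebraic manipulation gives the pointwise-in-$t$ inequality $\braket{a_t, x_t} \geq (1-\gamma) \braket{\hat{a}_t, \hat{x}_t}$, whence $f_t(x_t) \leq -\log(1-\gamma) + \hat{f}_t(\hat{x}_t)$. For any comparator $u \in \Delta$ I take $\tilde{u} \coloneqq (1-\gamma) u + (\gamma/d) e \in \Delta$: the bound $\braket{a_t, \tilde{u}} \geq (1-\gamma) \braket{a_t, u}$ gives $f_t(\tilde{u}) \leq -\log(1-\gamma) + f_t(u)$, and the pointwise inequality $\hat{a}_t \geq a_t$ gives $\hat{f}_t(\tilde{u}) \leq f_t(\tilde{u})$. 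Chaining these three facts reduces the true regret to the ghost regret plus a mixing penalty:
\[
R_T(u) \;\leq\; -2 T \log(1-\gamma) + \sum_{t=1}^T \bigl[ \hat{f}_t(\hat{x}_t) - \hat{f}_t(\tilde{u}) \bigr].
\]

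What remains is routine tuning. Using $-\log(1-\gamma) \leq \gamma/(1-\gamma)$, combining with the ghost bound, minimising the $\eta$-terms in closed form, and then balancing the mixing penalty against $\sqrt{T L \log d} = \sqrt{T d \log d / \gamma}$ produces $\gamma$ of order $(d/(T\log d))^{1/3}$, the $\eta$ in the statement, and the claimed $\tilde{O}(T^{2/3} d^{1/3})$ bound. The hypothesis $T > 4 d / \log d$ is exactly what is needed to keep $\gamma$ in $(0,1)$ and $L\eta < 1$ at those values. I expect the one step that does real work to be the relative-smoothness calculation: it is here that the mixing level $\gamma$ enters the regret as the relative-smoothness constant $L = d/\gamma$, replacing Helmbold's Lipschitz-based analysis and yielding the improved exponent $2/3$ in place of $3/4$.
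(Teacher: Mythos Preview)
Your plan is correct and follows essentially the same route as the paper: apply Theorem~\ref{thm_main} to the ghost sequence $(\hat{x}_t,\hat{f}_t)$ with $L=d/\gamma$, then relate the ghost regret to $R_T$. Two small points where you deviate from the paper are worth noting. First, the shifted comparator $\tilde{u}$ is unnecessary: since $\hat{a}_t\geq a_t$ entrywise (your own observation), one has $\hat{f}_t(u)\leq f_t(u)$ for the \emph{original} $u$, and the paper uses this directly, saving one $-T\log(1-\gamma)$ term. Second, for the player-side mixing step the paper uses the sharper relation $\braket{a_t,x_t}\geq(1-\gamma+\gamma/d)\braket{\hat{a}_t,\hat{x}_t}$ and then Jensen's inequality to obtain $f_t(x_t)-\hat{f}_t(\hat{x}_t)\leq\gamma\log d$; combined with the previous point this gives a single mixing penalty $\gamma T\log d$, which with the stated $\gamma$ and $\eta$ reproduces the exact constants in the theorem. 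Your bound $-2T\log(1-\gamma)\leq 2T\gamma/(1-\gamma)$ gives the same $\tilde O(T^{2/3}d^{1/3})$ order but not those constants, and degrades when $\gamma$ is close to $1$ (i.e., when $T$ is only slightly above $4d/\log d$).
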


As stated in Section \ref{sec_intro}, the original regret bound for \tildeEG{} is $\tilde{O} ( d^{1/2} T^{3/4} )$. 
The key to the improved regret bound is the following proposition. 

\begin{proposition} \label{prop_relative_smoothness}
For any $t \in \mathbb{N}$, the loss function $f_t = - \log \braket{ a_t, x }$ in OPS is $G_t$-smooth relative to the negative Shannon entropy $h$ (see Algorithm \ref{alg_tEG}) on $\ri \Delta$ for 
\[
G_t \coloneqq \sup_{x \in \Delta} \norm{ \nabla f_t ( x ) }_\infty = \max_{i, j \in [d]} \frac{a_t ( i )}{a_t ( j )} , 
\]
where $a_t ( i )$ and $a_t ( j )$ denote the $i$-th and $j$-th entry of $a_t$, respectively. 
\end{proposition}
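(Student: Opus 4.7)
The plan is to establish relative smoothness by showing that $G_t \nabla^2 h(x) - \nabla^2 f_t(x) \succeq 0$ for all $x \in \ri \Delta$, which is equivalent to convexity of $G_t h - f_t$ (since both functions are $C^2$ on $\ri \Delta$). This is the natural route because both Hessians have simple structure: $\nabla^2 h(x) = \operatorname{diag}(1/x(i))$ is diagonal, while $\nabla^2 f_t(x) = a_t a_t^{\top}/\langle a_t, x\rangle^2$ is rank one, so the question reduces to a scalar inequality.

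Concretely, I would fix $x \in \ri \Delta$ and an arbitrary direction $u \in \mathbb{R}^d$, and compare
\[
\langle u, \nabla^2 f_t(x) u \rangle = \frac{\langle a_t, u \rangle^2}{\langle a_t, x\rangle^2} \quad \text{versus} \quad \langle u, \nabla^2 h(x) u \rangle = \sum_{i=1}^d \frac{u(i)^2}{x(i)}.
\]
Writing $\langle a_t, u\rangle = \sum_i (u(i)/\sqrt{x(i)}) \cdot (a_t(i) \sqrt{x(i)})$ and applying Cauchy--Schwarz gives
\[
\langle a_t, u\rangle^2 \leq \Bigl(\sum_i u(i)^2/x(i)\Bigr)\Bigl(\sum_i a_t(i)^2 x(i)\Bigr).
\]
So it suffices to show $\sum_i a_t(i)^2 x(i) / \langle a_t, x\rangle^2 \leq G_t$. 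Bounding $a_t(i)^2 \leq \|a_t\|_\infty a_t(i)$ in the numerator yields $\sum_i a_t(i)^2 x(i) \leq \|a_t\|_\infty \langle a_t, x\rangle$, so the quotient is at most $\|a_t\|_\infty/\langle a_t, x\rangle$.

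The final and slightly delicate step is identifying this with $G_t$. Since $x \in \Delta$, the inner product $\langle a_t, x\rangle$ is a convex combination of the entries of $a_t$, hence $\langle a_t, x\rangle \geq \min_j a_t(j)$. Therefore
\[
\frac{\|a_t\|_\infty}{\langle a_t, x\rangle} \leq \frac{\max_i a_t(i)}{\min_j a_t(j)} = \max_{i,j \in [d]} \frac{a_t(i)}{a_t(j)} = G_t,
\]
and the characterization of $G_t$ as $\sup_{x \in \Delta} \|\nabla f_t(x)\|_\infty$ follows from $\nabla f_t(x) = -a_t/\langle a_t, x\rangle$ together with the same observation that $\langle a_t, x\rangle$ attains its infimum $\min_j a_t(j)$ on $\Delta$.

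I do not expect any serious obstacle: the Hessian computation is routine, Cauchy--Schwarz supplies the rank-one-versus-diagonal comparison, and the only thing to watch is that the supremum defining $G_t$ is over the closed simplex (so it is really attained at a vertex), which is exactly why the bound $\langle a_t, x\rangle \geq \min_j a_t(j)$ is tight and makes the constant $G_t$ the right one. The mild subtlety that $G_t$ might be $+\infty$ when some $a_t(j) = 0$ is harmless: in that case the statement is vacuous, and otherwise the argument above is self-contained.
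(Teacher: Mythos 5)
Your proof is correct and follows essentially the same route as the paper's: both reduce the claim to the Hessian comparison $\nabla^2 f_t(x) \preceq G_t \nabla^2 h(x)$ and prove it by a weighted-average inequality (your Cauchy--Schwarz with the splitting $a_t(i)\sqrt{x(i)}\cdot u(i)/\sqrt{x(i)}$ is the same step the paper carries out as Jensen's inequality with weights $a_t(i)x(i)/\langle a_t,x\rangle$), followed by bounding the resulting factor by $\|a_t\|_\infty/\langle a_t,x\rangle \le G_t$. Your added remarks on attainment of the supremum and the vacuous case $\min_j a_t(j)=0$ are fine but not needed beyond what the paper does.
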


A proof of the proposition above is provided in Appendix \ref{app_relative_smoothness}. 
Though the proposition above seems to require the loss functions to be Lipschitz, the issue of non-Lipschitz losses in OPS has been handled by the specific form of \tildeEG{}.
\added{Notice that $\hat{G}_t \coloneqq \sup_{x \in \Delta} \norm{ \nabla \hat{f}_t ( x ) }_\infty$ is upper bounded by $d/\gamma$.
In view of Proposition~\ref{prop_relative_smoothness}, Theorem~\ref{thm_main} can be applied to the sequences of points $\qty{\hat{x}_t}$ and loss functions $\qty{\hat{f}_t}$, with relative smooth parameter $d/\gamma$.
This explains why $G_t$ does not appear in Theorem~\ref{thm_eg}.
The rest of the analysis is to estimate the difference between the regrets of $\qty{x_t}$ and $\qty{\hat{x}_t}$.}
A proof of Theorem \ref{thm_eg} is provided in Appendix \ref{app_eg}, which essentially follows \citet{Helmbold1998}.

\subsection{LB-OMD}

LB-OMD is presented in Algorithm \ref{alg_lb_omd}, where $D_h$ denotes the Bregman divergence defined by $h$. 
The iterate $x_{t + 1}$ does not have a closed-form expression. 
Nevertheless, as pointed by \citet[Appendix B]{Kotlowski2019}, $x_{t + 1}$ can be efficiently computed by Newton's method minimizing a self-concordant function on $\mathbb{R}$. 
The per-iteration time is hence $\tilde{O} ( d )$. 

\begin{algorithm}[ht] 
\caption{LB-OMD, online mirror descent with the logarithmic barrier, for OPS.} 
\label{alg_lb_omd}
\hspace*{\algorithmicindent} \textbf{Input: } $\eta > 0$.
\begin{algorithmic}[1]
\STATE $h ( x ) \coloneqq - \sum_{i = 1}^d \log x ( i )$. 
\STATE $x_1 = e / d$. 
\FORALL{$t \in \mathbb{N}$}
	\STATE $x_{t + 1} \leftarrow \argmin_{x \in \Delta} \eta \braket{ \nabla f_t ( x_t ), x - x_t } + D_h ( x, x_t )$. 
\ENDFOR
\end{algorithmic}
\end{algorithm}

\begin{theorem} \label{thm_lb_omd}
Suppose that $T > d$. 
Setting
\[
\eta = \frac{\sqrt{d \log T}}{\sqrt{T} + \sqrt{d \log T}} , 
\]
LB-OMD satisfies 
\[
R_T \leq 2 \sqrt{ T d \log T} + d \log T + 2 = \tilde{O} ( \sqrt{ T d } ) . 
\]
\end{theorem}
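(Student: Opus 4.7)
The plan is to invoke Theorem~\ref{thm_main} with $L = 1$ and $h$ the logarithmic barrier. By Proposition~\ref{pro_loss_classical}, each $f_t(x) = -\log \braket{a_t, x}$ is both a $1$-self-concordant barrier and $1$-smooth relative to $h$ on $\mathbb{R}_{++}^d$, so the hypotheses are met. For any $\eta \in (0,1)$ and any comparator $\tilde{x} \in \ri \Delta$, Theorem~\ref{thm_main} then yields
\[
R_T(\tilde{x}) \leq \frac{D_h(\tilde{x}, x_1)}{\eta} + \frac{T \eta}{1 - \eta} .
\]

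The main obstacle is that $D_h(x, x_1) = +\infty$ whenever $x$ lies on the boundary of $\Delta$, so one cannot pass to the supremum over $\Delta$ directly. The natural remedy is a comparator-smoothing argument: for an arbitrary $x \in \Delta$, set $\tilde{x} \coloneqq (1 - \alpha) x + \alpha x_1$ with $\alpha = 1/T$, so that $\tilde{x} \in \ri \Delta$, and split
\[
R_T(x) = R_T(\tilde{x}) + \sum_{t = 1}^T \bigl[ f_t(\tilde{x}) - f_t(x) \bigr] .
\]
Because $\braket{a_t, \tilde{x}} \geq (1 - \alpha) \braket{a_t, x}$, each term in the second sum is at most $-\log(1 - \alpha) \leq \alpha/(1 - \alpha)$, so the sum is bounded by $T \alpha / (1 - \alpha) \leq 2$ for $T \geq 2$. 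For the Bregman term, $\nabla h(x_1) = -d \, e$, and $\braket{\nabla h(x_1), \tilde{x} - x_1} = 0$ since $\tilde{x}$ and $x_1$ both sum to one; hence
\[
D_h(\tilde{x}, x_1) = h(\tilde{x}) - h(x_1) = -\sum_{i=1}^d \log \tilde{x}(i) - d \log d \leq d \log(d / \alpha) - d \log d = d \log T ,
\]
using $\tilde{x}(i) \geq \alpha / d$.

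Combining these estimates yields, uniformly in $x \in \Delta$,
\[
R_T(x) \leq \frac{d \log T}{\eta} + \frac{T \eta}{1 - \eta} + 2 .
\]
It then remains to substitute the prescribed $\eta = \sqrt{d \log T} / (\sqrt{T} + \sqrt{d \log T})$, which is manifestly in $(0,1)$ as a convex combination. A short calculation gives $T \eta / (1 - \eta) = \sqrt{T d \log T}$ and $d \log T / \eta = \sqrt{T d \log T} + d \log T$, whence the claimed bound $R_T \leq 2 \sqrt{T d \log T} + d \log T + 2$ drops out. The hypothesis $T > d$ serves only to absorb the $d \log T$ term into the logarithmic factors on the leading $\sqrt{T d \log T}$, giving the final $\tilde{O}(\sqrt{T d})$ form. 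Apart from the boundary-smoothing bookkeeping, the proof is pure substitution once Theorem~\ref{thm_main} is in hand.
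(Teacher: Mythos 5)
Your proof is correct and follows essentially the same route as the paper: apply Theorem~\ref{thm_main} with $L=1$, handle the boundary via the shifted comparator $(1-1/T)x + x_1/T$ (which is exactly the content of Lemma~\ref{lem_luo}, cited from \citet{Luo2018}, that the paper uses in its proof of the quantum version in Appendix~\ref{app_q_lb_omd}), bound the Bregman term by $d\log T$, and substitute the prescribed $\eta$. The only cosmetic difference is that you prove the comparator-smoothing step inline rather than citing it.
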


We will directly analyze the regret of a quantum generalization of LB-OMD (see the next subsection) in Appendix \ref{app_q_lb_omd}. 
The proof Theorem \ref{thm_lb_omd} is similar and simpler and hence skipped. 

\subsection{Q-LB-OMD}

Q-LB-OMD is presented in Algorithm \ref{alg_q_lb_omd}, where $I$ denotes the identity matrix and $D_h$ denotes the Bregman divergence defined by $h$.
Following the convention of quantum information, we denote the iterates by $\rho_t$ instead of $x_t$. 
All matrices in the algorithm are of dimension $d \times d$. 
Q-LB-OMD is a direct quantum generalization of LB-OMD, in the sense that when all matrices involved share the same eigenbasis, then Q-LB-OMD is equivalent to LB-OMD. 
Similar to LB-OMD, $\rho_{t + 1}$ does not have a closed-form expression. 
Nevertheless, as pointed by \citet[Appendix B]{Kotlowski2019}, the iterate $\rho_{t + 1}$ can be computed by one eigendecomposition of $\rho_t$ followed by Newton's method minimizing a self-concordant function on $\mathbb{R}$. 
The per-iteration time is hence $O ( d^3 )$. 
In Section \ref{sec_intro}, we have shown that the per-iteration time of Q-LB-OMD is the shortest, in comparison to Q-Soft-Bayes and Schr\"{o}dinger's-BISONS. 

\begin{algorithm}[ht] 
\caption{Q-LB-OMD, online mirror descent with the log-det function, for the quantum generalization of OPS.} 
\label{alg_q_lb_omd}
\hspace*{\algorithmicindent} \textbf{Input: } $\eta > 0$.
\begin{algorithmic}[1]
\STATE $h ( \rho ) \coloneqq - \log \det \rho$. 
\STATE $\rho_1 = I / d$. 
\FORALL{$t \in \mathbb{N}$}
	\STATE $\rho_{t + 1} \leftarrow \argmin_{\rho \in \mathcal{D}_d} \eta \braket{ \nabla f_t ( \rho_t ), \rho - \rho_t } + D_h ( \rho, \rho_t )$. 
\ENDFOR
\end{algorithmic}
\end{algorithm}

\begin{theorem} \label{thm_q_lb_omd}
The statement of Theorem \ref{thm_lb_omd} also holds for Q-LB-OMD. 
\end{theorem}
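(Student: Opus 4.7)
The plan is to apply Theorem~\ref{thm_main} directly to the online convex optimization instance in which the regularizer is the log-determinant $h(\rho) = -\log\det\rho$. By Proposition~\ref{pro_loss_quantum}, each loss $f_t(\rho) = -\log\tr(A_t\rho)$ is a $1$-self-concordant barrier and is $1$-smooth relative to $h$ on $\ri\mathcal{D}_d$, so Theorem~\ref{thm_main} with $L=1$ applies and yields, for every $\sigma \in \ri\mathcal{D}_d$,
\[
R_T(\sigma) \leq \frac{D_h(\sigma,\rho_1)}{\eta} + \frac{T\eta}{1-\eta}.
\]
Using $\nabla h(\rho) = -\rho^{-1}$ together with $\rho_1 = I/d$ and $\tr\sigma = 1$, one computes $D_h(\sigma,\rho_1) = -\log\det\sigma - d\log d$, which is finite only in the relative interior of $\mathcal{D}_d$.

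To obtain a bound that is uniform over all $\rho \in \mathcal{D}_d$, I would introduce a shifted comparator $\tilde\rho \coloneqq (1-\alpha)\rho + (\alpha/d) I$ for a parameter $\alpha \in (0,1)$ to be chosen later. Since $(1-\alpha)\rho \succeq 0$, we obtain the Loewner inequality $\tilde\rho \succeq (\alpha/d) I$, so every eigenvalue of $\tilde\rho$ is at least $\alpha/d$; consequently $-\log\det\tilde\rho \leq d\log(d/\alpha)$ and hence $D_h(\tilde\rho,\rho_1) \leq d\log(1/\alpha)$. The shift cost is controlled by using $A_t \succeq 0$ to write $\tr(A_t\tilde\rho) \geq (1-\alpha)\tr(A_t\rho)$, whence $f_t(\tilde\rho) - f_t(\rho) \leq -\log(1-\alpha)$ for every $t$. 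Summing over $t$ and combining with Theorem~\ref{thm_main} applied to $\sigma = \tilde\rho$ gives
\[
R_T(\rho) \leq \frac{d\log(1/\alpha)}{\eta} + \frac{T\eta}{1-\eta} - T\log(1-\alpha).
\]

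Finally I would set $\alpha = 1/T$, so that $-T\log(1-1/T) \leq 2$ whenever $T \geq 2$ (which holds since $T > d \geq 1$), and insert the value of $\eta$ prescribed in Theorem~\ref{thm_lb_omd}. A direct calculation then gives $d\log(1/\alpha)/\eta = d\log T + \sqrt{Td\log T}$ and $T\eta/(1-\eta) = \sqrt{Td\log T}$, so summing with the shift cost of at most $2$ yields exactly the claimed bound $2\sqrt{Td\log T} + d\log T + 2$; taking the supremum over $\rho \in \mathcal{D}_d$ finishes the proof. The only substantive extra work beyond the classical LB-OMD argument is the spectral-versus-entrywise step: where the classical proof uses the coordinatewise inequality $\tilde x(i) \geq \alpha/d$, here I need the Loewner bound $\tilde\rho \succeq (\alpha/d) I$, which luckily follows immediately from the positive semidefiniteness of $\rho$ and commutativity with the trace. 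Everything else transfers verbatim to the non-commutative setting, which is exactly why the theorem asserts that the LB-OMD bound also holds for Q-LB-OMD.
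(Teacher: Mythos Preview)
Your proof is correct and follows essentially the same route as the paper: apply Theorem~\ref{thm_main} with $L=1$ (justified by Proposition~\ref{pro_loss_quantum}), then handle the unbounded Bregman divergence by shifting the comparator to $(1-1/T)\rho + I/(Td)$ and bounding both $D_h$ and the resulting regret discrepancy. The paper packages the shift step as Lemma~\ref{lem_luo} (a direct quantum analogue of \citet[Lemma~10]{Luo2018}), whereas you reprove that lemma inline with the same eigenvalue and trace estimates; the resulting constants match exactly.
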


A proof of the theorem above is provided in Appendix \ref{app_q_lb_omd}. 

\section{Concluding Remarks}

We have achieved the following. 
\begin{enumerate}
\item Provide an improved regret bound of \tildeEG{} for OPS. 
\item Show that LB-OMD is on the current efficiency-regret Pareto frontier for OPS. 
\item Show that Q-LB-OMD inherits the regret bound of LB-OMD and hence, regarding its scalability with respect to the dimension in both regret and per-iteration time, is competitive among existing algorithms. 
\end{enumerate}
The key idea in our analyses is to exploit the self-concordant barrier property and relative smoothness of the loss functions simultaneously.
We are not aware of any literature that adopts this approach. 
In this paper, we demonstrate the benefit of this approach with a standard algorithm: OMD with a constant learning rate. 
This approach may help us develop and analyze other, perhaps more complicated, algorithms for OPS and its quantum generalization. 

\added{The regret bounds in this paper are likely to be sub-optimal for the problem class we consider: online convex optimization with losses that are relatively smooth and self-concordant barriers. 
Indeed, it is easily checked that a self-concordant barrier is necessarily exp-concave \citep{Nesterov2018a}. 
Therefore, for the problem class we consider, logarithmic regrets may be achieved by the exponentially weighted online optimization (EWOO) algorithm \citep{Hazan2007}. 
For example, EWOO coincides with UPS, known to be regret-optimal, for online portfolio selection. 
\citet{Zimmert2022} claimed that EWOO achieves a $O ( d^2 \log T )$ regret for online learning quantum states with the logarithmic loss, but we have not found a proof.}

\added{Recall that, as discussed in the introduction, our aim is not to seek for regret-optimal algorithms, but to strike a balance between efficiency and regret. 
EWOO requires evaluating the expectation of a data-determined probability distribution in each iteration, which is computationally very expensive. 
The per-iteration time of UPS is already formidable. 
We are not aware of any polynomial-time implementation of EWOO for online learning quantum states with the logarithmic loss. }


\acks{\added{We thank the anonymous reviewers for their inspiring comments.}

C.-E.~Tsai and Y.-H.~Li are supported by the Young Scholar Fellowship (Einstein Program) of the National Science  and Technology Council of Taiwan under grant numbers MOST 108-2636-E-002-014, MOST 109-2636-E-002-025, MOST 110-2636-E-002-012, MOST 111-2636-E-002-019, and NSTC 112-2636-E-002-003 and by the research project ``Pioneering Research in Forefront Quantum Computing, Learning and Engineering'' of National Taiwan University under grant numbers NTU-CC-111L894606 and NTU-CC-112L893406. 

H.-C.~Cheng is supported by the Young Scholar Fellowship (Einstein Program) of the National Science  and Technology Council (NSTC) in Taiwan (R.O.C.) under Grant MOST 111-2636-E-002-026, Grant MOST 111-2119-M-007-006, Grant MOST 111-2119-M-001-004, and is supported by the Yushan Young Scholar Program of the Ministry of Education in Taiwan (R.O.C.) under Grant NTU-111V1904-3, and Grant NTU-111L3401, and by the research project ``Pioneering Research in Forefront Quantum Computing, Learning and Engineering'' of National Taiwan University under Grant No.~NTU-CC-111L894605."}

\bibliography{refs}

\begin{thebibliography}{37}
\providecommand{\natexlab}[1]{#1}
\providecommand{\url}[1]{\texttt{#1}}
\expandafter\ifx\csname urlstyle\endcsname\relax
  \providecommand{\doi}[1]{doi: #1}\else
  \providecommand{\doi}{doi: \begingroup \urlstyle{rm}\Url}\fi

\bibitem[Helmbold et~al.(1998)Helmbold, Shapire, Singer, and
  Warmuth]{Helmbold1998}
David~P. Helmbold, Robert~E. Shapire, Yoram Singer, and Manfred~K. Warmuth.
\newblock On-line portfolio selection using multiplicative updates.
\newblock \emph{Math. Financ.}, 8\penalty0 (4):\penalty0 325--347, 1998.

\bibitem[Orseau et~al.(2017)Orseau, Lattimore, and Legg]{Orseau2017}
Laurent Orseau, Tor Lattimore, and Shane Legg.
\newblock Soft-{B}ayes: Prod for mixtures of experts with log-loss.
\newblock In \emph{Proc. 28th Int. Conf. Algorithmic Learning Theory}, pages
  372--399, 2017.

\bibitem[Cover(1991)]{Cover1991}
Thomas~M. Cover.
\newblock Universal portfolios.
\newblock \emph{Math. Financ.}, 1\penalty0 (1):\penalty0 1--29, 1991.

\bibitem[Cover and Ordentlich(1996)]{Cover1996}
Thomas~M. Cover and Erick Ordentlich.
\newblock Universal portfolios with side information.
\newblock \emph{IEEE Trans. Inf. Theory}, 42\penalty0 (2):\penalty0 348--363,
  1996.

\bibitem[Kalai and Vempala(2002)]{Kalai2002}
Adam Kalai and Santosh Vempala.
\newblock Efficient algorithms for universal portfolios.
\newblock \emph{J. Mach. Learn. Res.}, 3:\penalty0 423--440, 2002.

\bibitem[Nesterov(2011)]{Nesterov2011}
Yurii Nesterov.
\newblock Barrier subgradient method.
\newblock \emph{Math. Program., Ser. B}, 127:\penalty0 31--56, 2011.

\bibitem[Luo et~al.(2018)Luo, Wei, and Zheng]{Luo2018}
Haipeng Luo, Chen-Yu Wei, and Kai Zheng.
\newblock Efficient online portfolio with logarithmic regret.
\newblock In \emph{Adv. Neural Information Processing Systems 31}, 2018.

\bibitem[Mhammedi and Rakhlin(2022)]{Mhammedi2022}
Zakaria Mhammedi and Alexander Rakhlin.
\newblock Damped {O}nline {N}ewton {S}tep for {P}ortfolio {S}election.
\newblock In \emph{Proc. 35th Annu. Conf. Learning Theory}, pages 5561--5595,
  2022.

\bibitem[Zimmert et~al.(2022)Zimmert, Agarwal, and Kale]{Zimmert2022}
Julian Zimmert, Naman Agarwal, and Satyen Kale.
\newblock Pushing the efficiency-regret {P}areto frontier for online learning
  of portfolios and quantum states.
\newblock In \emph{Proc. 35th Annu. Conf. Learning Theory}, pages 182--226,
  2022.

\bibitem[Lin et~al.(2021)Lin, Hsu, and Li]{Lin2021b}
Chien-Ming Lin, Yu-Ming Hsu, and Yen-Huan Li.
\newblock An {O}nline {A}lgorithm for {M}aximum-{L}ikelihood {Q}uantum {S}tate
  {T}omography.
\newblock In \emph{24th Annu. Conf. Quantum Information Processing}, 2021.
\newblock arXiv:2012.15498.

\bibitem[Nesterov and Nemirovskii(1994)]{Nesterov1994}
Yurii Nesterov and Arkadii Nemirovskii.
\newblock \emph{Interior-Point Polynomial Algorithms in Convex Programming}.
\newblock SIAM, Philadelphia, PA, 1994.

\bibitem[Bauschke et~al.(2017)Bauschke, Bolte, and Teboulle]{Bauschke2017}
Heinz~H. Bauschke, J\'{e}r\^{o}me Bolte, and Marc Teboulle.
\newblock A descent lemma beyond {L}ipschitz gradient continuity: first-order
  methods revisited and applications.
\newblock \emph{Math. Oper. Res.}, 42\penalty0 (2):\penalty0 330--348, 2017.

\bibitem[Lu et~al.(2018)Lu, Freund, and Nesterov]{Lu2018}
Haihao Lu, Robert~M. Freund, and Yurii Nesterov.
\newblock Relatively smooth convex optimization by first-order methods, and
  applications.
\newblock \emph{SIAM J. Optim.}, 28\penalty0 (1):\penalty0 333--354, 2018.

\bibitem[Eshraghi and Liang(2022)]{Eshraghi2022}
Nima Eshraghi and Ben Liang.
\newblock Dynamic regret of online mirror descent for relatively smooth convex
  cost functions.
\newblock \emph{IEEE Control Syst. Lett.}, 6:\penalty0 2395--2400, 2022.

\bibitem[Warmuth and Kuzmin(2006)]{Warmuth2006}
Manfred~K. Warmuth and Dima Kuzmin.
\newblock Online variance minimization.
\newblock In \emph{19th Annu. Conf. Learning Theory, COLT 2006}, pages
  514--528, 2006.

\bibitem[Arora and Kale(2007)]{Arora2007}
Sanjeev Arora and Satyen Kale.
\newblock A combinatorial, primal-dual approach to semidefinite programs.
\newblock In \emph{STOC '07: Proc. 39th Annu. ACM Symp. Theory of computing},
  pages 227--236, 2007.

\bibitem[Koolen et~al.(2011)Koolen, Kot{\l}owski, and Warmuth]{Koolen2011}
Wouter~M. Koolen, Wojtek Kot{\l}owski, and Manfred~K. Warmuth.
\newblock Learning eigenvectors for free.
\newblock In \emph{Adv. Neural Information Processing Systems 24}, 2011.

\bibitem[Aaronson et~al.(2018)Aaronson, Chen, Hazan, Kale, and
  Nayak]{Aaronson2018}
Scott Aaronson, Xinyi Chen, Elad Hazan, Satyen Kale, and Ashwin Nayak.
\newblock Online learning of quantum states.
\newblock In \emph{Adv. Neural Information Processing Systems 31}, 2018.

\bibitem[Chen et~al.(2022)Chen, Hazan, Li, Lu, Wang, and Yang]{Chen2022a}
Xinyi Chen, Elad Hazan, Tongyang Li, Zhou Lu, Xinzhao Wang, and Rui Yang.
\newblock Adaptive online learning of quantum states.
\newblock 2022.
\newblock arXiv:2206.00220 [cs.LG].

\bibitem[Yang et~al.(2020)Yang, Jiang, Zhang, and Sun]{Yang2020}
Feidiao Yang, Jiaqing Jiang, Jialin Zhang, and Xiaoming Sun.
\newblock Revisiting online quantum state learning.
\newblock In \emph{Proc. AAAI Conf. Artificial Intelligence}, 2020.

\bibitem[Kot{\l}owski and Neu(2019)]{Kotlowski2019}
Wojciech Kot{\l}owski and Gergely Neu.
\newblock Bandit principal component analysis.
\newblock In \emph{Proc. 32nd Conf. Learning Theory}, pages 1994--2024, 2019.

\bibitem[Abernethy et~al.(2008)Abernethy, Hazan, and Rakhlin]{Abernethy2008}
Jacob Abernethy, Elad Hazan, and Alexander Rakhlin.
\newblock Competing in the dark: An efficient algorithm for bandit linear
  optimization.
\newblock In \emph{Proc. 21st Annu. Conf. Learning Theory}, pages 263--273,
  2008.

\bibitem[Rakhlin and Sridharan(2013)]{Rakhlin2013a}
Alexander Rakhlin and Karthik Sridharan.
\newblock Online learning with predictable sequences.
\newblock In \emph{Proc. 26th Annu. Conf. Learning Theory}, pages 993--1019,
  2013.

\bibitem[Hazan(2016)]{Hazan2016}
Elad Hazan.
\newblock Introduction to online convex optimization.
\newblock \emph{Found. Trends Opt.}, 2\penalty0 (3--4):\penalty0 157--325,
  2016.

\bibitem[Cesa-Bianchi and Lugosi(2006)]{Cesa-Bianchi2006}
Nicol\`{o} Cesa-Bianchi and G\'{a}bor Lugosi.
\newblock \emph{Prediction, Learning, and Games}.
\newblock Cambridge Univ. Press, Cambridge, UK, 2006.

\bibitem[{van Erven} et~al.(2020){van Erven}, {van der Hoeven}, Kot{\l}owski,
  and Koolen]{van-Erven2020}
Tim {van Erven}, Dirk {van der Hoeven}, Wojciech Kot{\l}owski, and Wouter~M.
  Koolen.
\newblock Open {P}roblem: {F}ast and {O}ptimal {O}nline {P}ortfolio
  {S}election.
\newblock In \emph{Proc. 33rd Annu. Conf. Learning Theory}, pages 3864--3869,
  2020.

\bibitem[Zhang et~al.(2017)Zhang, Yang, Yi, Jin, and Zhou]{Zhang2017}
Lijun Zhang, Tianbao Yang, Jinfeng Yi, Rong Jin, and Zhi-Hua Zhou.
\newblock Improved dynamic regret for non-degenerate functions.
\newblock In \emph{Adv. in Neural Information Processing Systems 30}, 2017.

\bibitem[Srebro et~al.(2010)Srebro, Sridharan, and Tewari]{Srebro2010}
Nathan Srebro, Karthik Sridharan, and Ambuj Tewari.
\newblock Smoothness, low-noise and fast rates.
\newblock In \emph{Adv. Neural Information Processing Systems 23}, 2010.

\bibitem[Orabona(2022)]{Orabona2022}
Francesco Orabona.
\newblock A modern introduction to online learning.
\newblock 2022.
\newblock arXiv:1912.13213v5 [cs.LG].

\bibitem[Jadbabaie et~al.(2015)Jadbabaie, Rakhlin, Shahrampour, and
  Sridharan]{Jadbabaie2015}
Ali Jadbabaie, Alexander Rakhlin, Shahin Shahrampour, and Karthik Sridharan.
\newblock Online optimization: Competing with dynamic comparators.
\newblock In \emph{Proc. 18th Int. Conf. Artificial Intelligence and
  Statistics}, pages 398--406, 2015.

\bibitem[Mokhtari et~al.(2016)Mokhtari, Shahrampour, Jadbabaie, and
  Ribeiro]{Mokhtari2016}
Aryan Mokhtari, Shahin Shahrampour, Ali Jadbabaie, and Alejandro Ribeiro.
\newblock Online optimization in dynamic environments: Improved regret rates
  for strongly convex problems.
\newblock In \emph{IEEE 55th Conf. Decision and Control (CDC)}, pages
  7195--7201, 2016.

\bibitem[Abernethy et~al.(2012)Abernethy, Hazan, and Rakhlin]{Abernethy2012}
Jacob~D. Abernethy, Elad Hazan, and Alexander Rakhlin.
\newblock Interior-{P}oint {M}ethods for {F}ull-{I}nformation and {B}andit
  {O}nline {L}earning.
\newblock \emph{IEEE Trans. Inf. Theory}, 58\penalty0 (7):\penalty0 4164--4175,
  2012.

\bibitem[Nesterov(2018)]{Nesterov2018a}
Yurii Nesterov.
\newblock \emph{Lectures on Convex Optimization}.
\newblock Springer, Cham, CH, second edition, 2018.

\bibitem[Teboulle(2018)]{Teboulle2018}
Marc Teboulle.
\newblock A simplified view of first order methods for optimization.
\newblock \emph{Math. Program., Ser. B}, 170:\penalty0 67--96, 2018.

\bibitem[Hazan et~al.(2007)Hazan, Agarwal, and Kale]{Hazan2007}
Elad Hazan, Amit Agarwal, and Satyen Kale.
\newblock Logarithmic regret algorithms for online convex optimization.
\newblock \emph{Mach. Learn.}, 69\penalty0 (2):\penalty0 169--192, 2007.

\bibitem[Bhatia(1997)]{Bhatia1997}
Rajendra Bhatia.
\newblock \emph{Matrix Analysis}.
\newblock Springer, New York, NY, 1997.

\bibitem[Hiai and Petz(2014)]{Hiai2014}
Fumio Hiai and D\'{e}nes Petz.
\newblock \emph{Introduction to Matrix Analysis and Applications}.
\newblock Springer, Cham, 2014.

\end{thebibliography}

\appendix

\section{Proof of Proposition \ref{pro_loss_quantum}} \label{app_loss_quantum}

\subsection{Relative Smoothness}

We will use the following lemma \citep[Exercise IV.2.7]{Bhatia1997}.

\begin{lemma}[A matrix Cauchy-Schwarz inequality]\label{lem:19}
	Let $K,L\in\mathbb{C}^{d \times d}$. 
	Then, 
	\begin{equation*}
		\qty(\tr\abs{KL})^2 \leq \tr(\abs{K}^2)\tr(\abs{L}^2)
	\end{equation*}
	where $\abs{A}\coloneqq(A^\ast A)^{1/2}$. 
\end{lemma}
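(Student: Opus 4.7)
The plan is to reduce the inequality to the ordinary Cauchy--Schwarz inequality for the Hilbert--Schmidt inner product $\langle A, B\rangle_{\mathrm{HS}} \coloneqq \tr(A^* B)$, by means of the polar decomposition of $KL$. Recall that $\|A\|_{\mathrm{HS}}^2 = \tr(A^* A) = \tr(|A|^2)$, so the right-hand side $\tr(|K|^2)\,\tr(|L|^2)$ is already a product of squared Hilbert--Schmidt norms; the task is to express $\tr|KL|$ as a Hilbert--Schmidt inner product of two factors whose norms are controlled by $|K|$ and $|L|$.

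First, write the polar decomposition $KL = W|KL|$, where $W$ is a partial isometry with $W^* W$ equal to the orthogonal projection onto $\overline{\operatorname{range}}(|KL|)$. Then $W^* KL = W^* W\, |KL| = |KL|$, so $\tr|KL| = \tr(W^* KL)$. Next, rewrite this as
\[
\tr|KL| \;=\; \tr\bigl((K^* W)^* L\bigr) \;=\; \langle K^* W,\,L\rangle_{\mathrm{HS}},
\]
and apply the scalar Cauchy--Schwarz inequality to obtain
\[
\bigl(\tr|KL|\bigr)^2 \;\le\; \|K^* W\|_{\mathrm{HS}}^2 \; \|L\|_{\mathrm{HS}}^2.
\]

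The factor $\|L\|_{\mathrm{HS}}^2$ is immediately $\tr(|L|^2)$. For the other factor, use cyclicity to get $\|K^* W\|_{\mathrm{HS}}^2 = \tr(W^* K K^* W) = \tr(KK^*\, WW^*)$. Since $W$ is a partial isometry, $WW^*$ is an orthogonal projection and hence $0 \preceq WW^* \preceq I$; combined with $KK^* \succeq 0$, this yields $\tr(KK^*\,WW^*) \le \tr(KK^*)$. Cyclicity once more gives $\tr(KK^*) = \tr(K^* K) = \tr(|K|^2)$, completing the chain.

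The only nontrivial bookkeeping is verifying that $\tr|KL| = \tr(W^* KL)$ from the polar decomposition and that $\tr(KK^*\,WW^*) \le \tr(KK^*)$ via the projection property of $WW^*$; both are standard facts whose justification I would state briefly. Everything else is routine manipulation of traces plus the scalar Cauchy--Schwarz inequality, so no substantial obstacle is expected.
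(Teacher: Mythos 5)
Your proof is correct. Note that the paper does not prove this lemma at all --- it is cited verbatim from Bhatia (Exercise IV.2.7) --- so there is no in-paper argument to compare against; your write-up supplies a genuine self-contained proof. All steps check out: with the polar decomposition $KL = W\,|KL|$, the partial isometry satisfies $W^*W\,|KL| = |KL|$ (since $W^*W$ is the projection onto $\operatorname{range}|KL|$), so $\tr|KL| = \tr(W^*KL) = \langle K^*W, L\rangle_{\mathrm{HS}}$ is real and nonnegative, and Cauchy--Schwarz for the Hilbert--Schmidt inner product applies; the remaining bound $\tr(KK^*\,WW^*) \le \tr(KK^*)$ follows from $\tr\bigl((KK^*)^{1/2}(I - WW^*)(KK^*)^{1/2}\bigr) \ge 0$, exactly as you indicate. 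For comparison, the proof usually attached to Bhatia's exercise runs through singular values: $\tr|KL| = \sum_j \sigma_j(KL) \le \sum_j \sigma_j(K)\,\sigma_j(L)$ by the weak majorization of singular values of a product (the same Weyl/Horn machinery the paper invokes elsewhere in this appendix), followed by scalar Cauchy--Schwarz on the sequences $(\sigma_j(K))$ and $(\sigma_j(L))$. Your route avoids majorization entirely and uses only the polar decomposition plus the inner-product Cauchy--Schwarz inequality, which is arguably more elementary and equally rigorous.
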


Define $\varphi ( \rho ) \coloneqq \log \tr ( A \rho ) - \log \det \rho$. 
It suffices to show that $\varphi$ is convex on $\ri \mathcal{D}_d$; 
equivalently, it suffices to show that $D^2 \varphi ( \rho ) [ \sigma, \sigma] \geq 0$ for all $\rho \in \ri \mathcal{D}_d$ and $\sigma \in \mathbb{S}^d$ \citep[Proposition 17.7]{Bauschke2017}. 
Since $\varphi ( \rho ) = \log \tr ( A \rho ) - \tr \log ( \rho )$ \citep[Theorem 3.13]{Hiai2014}, we write \citep[Theorem 3.23 and Example 3.20]{Hiai2014}
\begin{align*}
D^2 \varphi ( \rho )[ \sigma, \sigma ] & = \left. \frac{\du^2 \varphi}{\du t^2} ( \rho + t \sigma ) \right\vert_{t = 0} \\
& = \left. \frac{\du}{\du t} \left\{ \frac{\tr ( A \sigma )}{ \tr ( A \rho + t A \sigma ) } - \tr \left[ \sigma ( \rho + t \sigma )^{-1} \right] \right\} \right\vert_{t = 0} \\
& = \left. \left\{ - \left( \frac{\tr ( A \sigma )}{\tr ( A \rho + t A \sigma )} \right) ^ 2 + \tr \left[ \sigma ( \rho + t \sigma )^{-1} \sigma ( \rho + t \sigma )^{-1} \right] \right\} \right\vert_{t = 0} \\
& = - \left( \frac{ \tr ( A \sigma ) }{\tr ( A \rho )} \right) ^ 2 + \tr \left[ ( \sigma \rho^{-1} ) ^ 2 \right] . 
\end{align*}

Now, it suffices to prove that $(\tr(A\sigma))^2 \leq (\tr(A\rho))^2\tr((\sigma\rho^{-1})^2)$. 
Let $K = \rho^{1/2}A\rho^{1/2}$ and $L = \rho^{-1/2}\sigma\rho^{-1/2}$. 
Applying Lemma~\ref{lem:19} with $K=\rho^{1/2}A\rho^{1/2}$ and $L=\rho^{-1/2}\sigma\rho^{-1/2}$, we have
		\begin{align*}
			(\tr(A\sigma))^2 &= (\tr(KL))^2 \\
			&\leq (\tr\abs{KL})^2 \\
			&\leq \tr(\abs{K}^2)\tr(\abs{L}^2) \\
			&= \tr((A\rho)^2)\tr((\sigma\rho^{-1})^2) \\
			&\leq (\tr(A\rho))^2\tr((\sigma\rho^{-1})^2).
		\end{align*}
In the above, the first inequality follows from Weyl's majorant theorem \cite[Theorem II.3.6]{Bhatia1997}; 
the second inequality follows from Lemma~\ref{lem:19}. 
As for the third inequality, because $A,\rho\in \mathbb{S}_+^D$, $A\rho$ has non-negative eigenvalues $\lambda_i\geq 0$ ($i\in[D]$). 
Then, 
\[
\tr((A\rho)^2) = \sum_{i=1}^d\lambda_i^2 \leq \left( \sum_{i=1}^d\lambda_i \right)^2 = (\tr(A\rho))^2 . 
\]
This completes the proof.

\subsection{Self-Concordant Barrier Property}

First, $\dom f = \set{ \rho \in \mathbb{S}^d | \tr ( A \rho ) > 0 }$ is open in $\mathbb{S}^d$. 
A direct calculation gives
\begin{align*}
& D f ( \rho ) [ \sigma ] = - \frac{ \tr ( A \sigma ) }{ \tr ( A \rho ) } , \quad D^2 f ( \rho ) [ \sigma, \sigma ] = \left( \frac{ \tr ( A \sigma ) }{ \tr ( A \rho ) } \right) ^ 2 \\
& D^3 f ( \rho ) [ \sigma, \sigma, \sigma ] = - 2 \left( \frac{ \tr ( A \sigma ) }{ \tr ( A \rho ) } \right) ^ 2 . 
\end{align*}
The self-concordant barrier property of $f$ follows. 

\section{Proof of Lemma \ref{lem_md}} \label{app_md}

We will use the following lemma \citep[Lemma 3.1]{Teboulle2018}. 

\begin{lemma}[Bregman proximal inequality] \label{lem_bregman_ineq}
Let $\varphi: \mathbb{R}^d \to ( - \infty, \infty ]$ be a proper closed convex function. 
Let $x \in \inte \dom h$ and $\eta > 0$. 
Define 
\[
x_+ \in \argmin_{u \in \mathbb{R}^d} \varphi ( u ) + \frac{ D_h ( u, x ) }{\eta} . 
\]
Then, 
\[
\varphi ( x_+ ) - \varphi ( u ) \leq \frac{1}{\eta} \left( D_h ( u, x ) - D_h ( u, x_+ ) - D_h ( x_+, x ) \right) , \quad \forall u \in \dom h . 
\]
\end{lemma}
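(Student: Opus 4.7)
The plan is to verify the inequality via the standard first-order optimality argument for the Bregman proximal step, combined with the three-point identity for Bregman divergences. No self-concordance or relative smoothness is needed; this is a purely convex-analytic statement.

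First I would record the optimality condition for $x_+$. Since $h$ is Legendre, $h$ is differentiable on $\inte \dom h$ and the minimizer $x_+$ of the displayed objective lies in $\inte \dom h$, so $\nabla h (x_+)$ exists. The subdifferential calculus then yields a subgradient $g \in \partial \varphi ( x_+ )$ with
\[
g + \frac{1}{\eta} \bigl( \nabla h ( x_+ ) - \nabla h ( x ) \bigr) = 0 ,
\]
i.e.\ $\eta g = \nabla h ( x ) - \nabla h ( x_+ )$. Convexity of $\varphi$ then gives, for every $u \in \dom h$,
\[
\varphi ( x_+ ) - \varphi ( u ) \leq \braket{ g, x_+ - u } = \frac{1}{\eta} \braket{ \nabla h ( x ) - \nabla h ( x_+ ), x_+ - u } .
\]

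The second step is to identify the right-hand inner product with the claimed combination of Bregman divergences. Expanding each term of $D_h ( u, x ) - D_h ( u, x_+ ) - D_h ( x_+, x )$ using the definition in \eqref{eq_bregman_divergence}, the $h ( u )$, $h ( x )$, and $h ( x_+ )$ contributions cancel pairwise, leaving only the linear terms, which reassemble to
\[
D_h ( u, x ) - D_h ( u, x_+ ) - D_h ( x_+, x ) = \braket{ \nabla h ( x ) - \nabla h ( x_+ ), x_+ - u } .
\]
This is the classical three-point identity. Substituting it into the previous display yields exactly the inequality in the statement.

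The only subtle point is the justification of the existence of $x_+$ and of the subgradient identity $-\eta g = \nabla h ( x_+ ) - \nabla h ( x )$ when $\varphi$ is an extended-valued proper closed convex function. This is handled by the Legendre assumption on $h$: $\dom \nabla h = \inte \dom h$, $h$ is strictly convex on its domain, and $\nabla h$ is surjective onto $\mathbb{R}^d$ in the essentially smooth sense, so the strictly convex objective $\varphi ( \cdot ) + \eta^{-1} D_h ( \cdot , x )$ attains its minimum in $\inte \dom h$ whenever the minimum is finite; when the minimum is not finite the inequality is vacuous since the right-hand side is finite but the left-hand side is $- \infty$. I would state this briefly and then present the two-line computation above. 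The hardest part is not any calculation but cleanly addressing the non-smoothness of $\varphi$ via a subgradient, after which the three-point identity finishes everything.
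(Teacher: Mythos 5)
Your proof is correct and is essentially the argument of the cited reference (\citet[Lemma 3.1]{Teboulle2018}), which the paper invokes without reproving: first-order optimality of the prox step giving $\eta g = \nabla h(x) - \nabla h(x_+)$ for some $g \in \partial\varphi(x_+)$, the subgradient inequality, and the three-point identity. Your handling of the Legendre/essential-smoothness point ensuring $x_+ \in \inte\dom h$ is the right way to justify the optimality condition for extended-valued $\varphi$.
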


By the $L$-smoothness of $f_t$ relative to $h$ and convexity of $f_t$, we write
\begin{align*}
f_t ( x_{t + 1} ) & \leq f_t ( x_t ) + \braket{ \nabla f_t ( x_t ), x_{t + 1} - x_t } + L D_h ( x_{t + 1}, x_t ) \\
& = f_t ( x_t ) + \braket{ \nabla f_t ( x_t ), x - x_t } + \braket{ \nabla f_t ( x_t ), x_{t + 1} - x } + L D_h ( x_{t + 1}, x_t ) \\
& \leq f_t ( x ) + \braket{ \nabla f_t ( x_t ), x_{t + 1} - x } + L D_h ( x_{t + 1}, x_t ) . 
\end{align*}
Applying Lemma \ref{lem_bregman_ineq} to \eqref{eq_md}, we write 
\[
\braket{ \nabla f_t ( x_t ), x_{t + 1} - x } \leq \frac{1}{\eta} \left( D_h ( x, x_t ) - D_h ( x, x_{t + 1} ) - D_h ( x_{t + 1}, x_t ) \right) . 
\]
Combining the two inequalities, we get 
\begin{align*}
f_t ( x_{t + 1} ) & \leq f_t ( x ) + \frac{D_h ( x, x_t )}{\eta} - \frac{D_h ( x, x_{t + 1} )}{\eta} + \left( L - \frac{1}{\eta} \right) D_h ( x_{t + 1}, x_t ) \\
& \leq f_t ( x ) + \frac{D_h ( x, x_t )}{\eta} - \frac{D_h ( x, x_{t + 1} )}{\eta} , 
\end{align*}
where the last line is by the assumption $\eta < 1 / L$. 
The lemma then follows by a telescopic sum. 

\section{Proof of Proposition \ref{prop_relative_smoothness}} \label{app_relative_smoothness}

We drop the subscript $t$ in $f_t$, $a_t$, and $G_t$ for convenience. 
It suffices to show that $\nabla ^ 2 f ( x ) \leq G \nabla ^ 2 h ( x )$ for all $x \in \ri \Delta$. 
We write 
\begin{align*}
\braket{ v, \nabla ^ 2 f ( x ) v } & = \left( \frac{ \braket{ a, v } }{\braket{ a, x }} \right) ^ 2 \\
& = \left( \sum_{i = 1}^d \frac{ a ( i ) x ( i ) }{ \braket{ a, x } } \frac{ v ( i ) }{ x ( i ) } \right) ^ 2 \\
& \leq \sum_{i = 1}^d \frac{ a ( i ) x ( i ) }{ \braket{ a, x } } \left( \frac{ v ( i ) }{ x ( i ) } \right) ^ 2 \\
& = \sum_{i = 1}^d \frac{a ( i )}{ \braket{ a, x } } \frac{ \left( v ( i ) \right) ^ 2 }{x ( i )} \\
& \leq G \sum_{i = 1}^d \frac{ \left( v ( i ) \right) ^ 2 }{x ( i )} ,
\end{align*}
where the first inequality follows from Jensen's inequality and the second follows from the definition of $G$. 
It remains to notice that 
\[
\sum_{i = 1}^d \frac{ \left( v ( i ) \right) ^ 2 }{x ( i )} = \braket{ v, \nabla ^ 2 h ( x ) v } . 
\]

\section{Proof of Theorem \ref{thm_eg}} \label{app_eg}

%
%

The proof strategy is similar to that of \citet[proof of Theorem 4.2]{Helmbold1998}. 
Since $\norm{ a_t }_\infty \leq 1$, $\norm{ \hat{a}_t }_\infty \leq 1$. 
Then, by the definition of $\hat{a}_t$, we have
\[
\hat{G}_\infty \coloneqq \sup_{t \in \mathbb{N}} \max_{i, j \in [d]} \frac{\hat{a}_t ( i )}{\hat{a}_t (j)} \leq \frac{d}{\gamma} . 
\]
Notice that $D_h ( x, x_1 ) \leq \log d$ for all $x \in \Delta$. 
Theorem \ref{thm_main} then implies 
\begin{equation}
\sum_{t = 1}^T \hat{f}_t ( \hat{x}_t ) - \sum_{t = 1}^T \hat{f}_t ( x ) \leq \frac{\log d}{\eta} + \frac{T d \eta}{\gamma - d \eta} .  \label{eq_8}
\end{equation}
By the definition of $\hat{a}_t$, we have 
\[
\braket{ \hat{a}_t, x } = \left( 1 - \frac{\gamma}{d} \right) \braket{ a_t, x } + \frac{\gamma}{d} \geq \braket{ a_t, x } . 
\]
Therefore, 
\begin{equation}
\sum_{t = 1}^T \hat{f}_t ( x ) \leq \sum_{t = 1}^T f_t ( x ) . \label{eq_9}
\end{equation}
Following the argument before (4.4) of \citet[proof of Theorem 4.2]{Helmbold1998}, we have 
\[
\log \braket{ a_t, x_t } \geq \log \braket{ \hat{a}_t, \hat{x}_t } + \log \left( 1 - \gamma + \frac{\gamma}{d} \right) . 
\]
By Jensen's inequality\footnote{\citet[proof of Theorem 4.2]{Helmbold1998} assumes $\gamma \in ( 0, 1 / 2 ]$ and bounds the quantity above by $- 2 \gamma$. }, 
\[
\log \left( 1 - \gamma + \frac{\gamma}{d} \right) \geq ( 1 - \gamma ) \log 1 + \gamma \log \frac{1}{d} = - \gamma \log d. 
\]
Therefore, 
\begin{equation}
\sum_{t = 1}^T f_t ( x_t ) - \sum_{t = 1}^T \hat{f}_t ( \hat{x}_t ) \leq \gamma T \log d . \label{eq_10}
\end{equation}
Combining \eqref{eq_8}, \eqref{eq_9}, \eqref{eq_10}, we get 
\[
R_T \leq \frac{\log d}{\eta} + \frac{T d \eta}{\gamma - d \eta} + \gamma T \log d . 
\]
We first choose 
\[
\eta = \frac{ \gamma \sqrt{ \log d } }{ \sqrt{ T d \gamma } + d \sqrt{\log d} } , 
\]
which gives
\[
R_T \leq \frac{2 \sqrt{ T d \log d }}{\sqrt{\gamma}} + \frac{d \log d}{\gamma} + \gamma T \log d . 
\]
We then choose $\gamma$ that minimizes the sum of the first and third terms. 
The constraint on $T$ is to ensure that $\gamma \in ( 0, 1 )$. 

\section{Proof of Theorem \ref{thm_q_lb_omd}} \label{app_q_lb_omd}

Since $D_h ( x, I / d )$ can be arbitrarily large when $x$ approaches the boundary of $\mathcal{D}_d$, we cannot directly apply Theorem \ref{thm_main}. 
We make use of the following lemma \citep[Lemma 10]{Luo2018}. 

\begin{lemma} \label{lem_luo}
Define
\[
\overline{\rho} \coloneqq \left( 1 - \frac{1}{T} \right) \rho + \frac{I}{Td} , \quad \forall \rho \in \mathcal{D}_d . 
\]
Then, 
\[
R_T ( \rho ) \leq R_T ( \overline{\rho} ) + 2 . 
\]
\end{lemma}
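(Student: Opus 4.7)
The plan is to combine Proposition \ref{pro_loss_quantum}, Theorem \ref{thm_main}, and Lemma \ref{lem_luo}, optimizing the learning rate at the end. The one subtlety is that Theorem \ref{thm_main} requires $D_h(\rho,\rho_1)$ to be finite, which fails for the log-det regularizer when the comparator $\rho$ lies on the boundary of $\mathcal{D}_d$. Lemma \ref{lem_luo} is precisely what lets us sidestep that obstruction: instead of bounding $R_T(\rho)$ directly, it suffices to bound $R_T(\overline{\rho})$ for the smoothed comparator $\overline{\rho} = (1-1/T)\rho + I/(Td)$, whose eigenvalues are bounded below by $1/(Td)$, at the cost of an additive constant $2$.

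First I would verify that Theorem \ref{thm_main} applies to Q-LB-OMD: by Proposition \ref{pro_loss_quantum}, each loss $f_t(\rho) = -\log\tr(A_t\rho)$ is simultaneously a $1$-self-concordant barrier and $1$-smooth relative to $h(\rho) = -\log\det\rho$, and $h$ is Legendre on $\mathbb{S}_{++}^d \supseteq \ri \mathcal{D}_d$. Hence with $L = 1$ and any $\eta \in (0,1)$,
\[
R_T(\overline{\rho}) \leq \frac{D_h(\overline{\rho},\rho_1)}{\eta} + \frac{T\eta}{1-\eta}.
\]

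Next I would compute $D_h(\overline{\rho},\rho_1)$ for $\rho_1 = I/d$. Using $\nabla h(\rho) = -\rho^{-1}$ and $\tr\overline{\rho} = 1$, a direct calculation gives
\[
D_h(\overline{\rho}, I/d) = -\log\det\overline{\rho} - d\log d.
\]
Because every eigenvalue of $\overline{\rho}$ is at least $1/(Td)$, we obtain $-\log\det\overline{\rho} \leq d\log(Td)$, and therefore $D_h(\overline{\rho},I/d) \leq d\log T$. Substituting this into the Theorem \ref{thm_main} bound yields
\[
R_T(\overline{\rho}) \leq \frac{d\log T}{\eta} + \frac{T\eta}{1-\eta}.
\]

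Finally I would plug in the prescribed $\eta = \sqrt{d\log T}/(\sqrt{T}+\sqrt{d\log T})$. This choice makes $1-\eta = \sqrt{T}/(\sqrt{T}+\sqrt{d\log T})$, so the two terms become $\sqrt{d\log T}(\sqrt{T}+\sqrt{d\log T}) = \sqrt{Td\log T} + d\log T$ and $T\sqrt{d\log T}/\sqrt{T} = \sqrt{Td\log T}$ respectively. Adding these and invoking Lemma \ref{lem_luo}, we conclude
\[
R_T \leq R_T(\overline{\rho}) + 2 \leq 2\sqrt{Td\log T} + d\log T + 2,
\]
which is exactly the statement of Theorem \ref{thm_lb_omd}. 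The constraint $T > d$ is used only to ensure $\eta < 1$ so that Theorem \ref{thm_main} applies. The main obstacle is the Bregman divergence computation at the smoothed comparator; once that is in hand, the rest is routine optimization of the learning rate.
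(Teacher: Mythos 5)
You have not proved the statement you were asked to prove. The statement is Lemma \ref{lem_luo} itself --- the comparator-smoothing inequality $R_T(\rho) \leq R_T(\overline{\rho}) + 2$ --- but your argument invokes exactly that inequality as a black box in its final step (``invoking Lemma \ref{lem_luo}, we conclude $R_T \leq R_T(\overline{\rho}) + 2 \leq \cdots$'') and instead establishes the downstream regret bound of Theorem \ref{thm_q_lb_omd}. Everything you do write (applying Proposition \ref{pro_loss_quantum} and Theorem \ref{thm_main}, computing $D_h(\overline{\rho}, I/d) \leq d\log T$, tuning $\eta$) concerns $R_T(\overline{\rho})$; the actual content of the lemma, namely that swapping the comparator $\rho$ for its smoothed version $\overline{\rho}$ costs at most an additive $2$ in regret, is never established. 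So the proof of the requested statement is entirely missing, and the overall argument is circular with respect to it.

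The missing argument is short. Since the iterates $\rho_t$ do not depend on the comparator,
\[
R_T(\rho) - R_T(\overline{\rho}) = \sum_{t=1}^{T} \bigl( f_t(\overline{\rho}) - f_t(\rho) \bigr) = \sum_{t=1}^{T} \log \frac{\tr(A_t \rho)}{\tr(A_t \overline{\rho})} .
\]
Because $A_t \succeq 0$, we have $\tr(A_t \overline{\rho}) = (1 - 1/T)\tr(A_t \rho) + \tr(A_t)/(Td) \geq (1 - 1/T)\tr(A_t \rho)$, so each summand is at most $-\log(1 - 1/T)$, and the sum is at most $-T\log(1 - 1/T) = T\log\bigl(1 + 1/(T-1)\bigr) \leq T/(T-1) \leq 2$ for $T \geq 2$ (which holds under the standing assumption $T > d$). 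This is precisely the observation the paper makes: the proof of Lemma 10 of \citet{Luo2018} uses only the convexity of the losses and the bound $\tr(A_t \rho) \leq \tr(A_t \overline{\rho})/(1 - 1/T)$, and therefore transfers verbatim to the quantum setting. You should supply this computation (or the explicit reduction to the cited lemma) before using the inequality in the proof of Theorem \ref{thm_q_lb_omd}.
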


\begin{proof}
The proof of \citet[Lemma 10]{Luo2018} only requires the convexity of $f_t$ and the fact that $\tr ( A_t \rho ) \leq \tr ( A_t \overline{\rho} / ( 1 - 1 / T ) )$ (in our context) and hence directly extends for the quantum case. 
\end{proof}

By Lemma \ref{lem_luo}, we have 
\begin{align*}
R_T ( \rho ) & \leq R_T ( \overline{\rho} ) + 2 \\
& \leq \frac{D_h ( \overline{\rho}, I / d )}{\eta} + \frac{T \eta}{1 - \eta} + 2 . 
\end{align*}
It remains to notice that 
\[
D_h ( \overline{\rho}, I / d ) = - \log \det \overline{\rho} - d \log d = - \tr ( \log \overline{\rho} ) - d \log d \leq d \log T . 
\]

\section{\replaced{Alternative}{An} Analysis of LB-FTRL with Linearized Losses} \label{app_lbftrl}

\begin{algorithm}[ht] 
\caption{LB-FTRL \replaced{with linearized losses}{, FTRL with the logarithmic barrier, for OPS}} 
\label{alg_lb_ftrl}
\hspace*{\algorithmicindent} \textbf{Input: } $\eta > 0$.
\begin{algorithmic}[1]
\STATE $h ( x ) \coloneqq - \sum_{i = 1}^d \log x ( i )$. 
\STATE $x_1 \in \argmin_{x\in\Delta} h(x)$. 
\FORALL{$t \in \mathbb{N}$}
	\STATE $x_{t + 1} \leftarrow \argmin_{x \in \Delta} \eta \sum_{\tau=1}^t \braket{ \nabla f_\tau ( x_\tau ), x - x_\tau } + h ( x )$. 
\ENDFOR
\end{algorithmic}
\end{algorithm}

\begin{algorithm}[ht] 
\caption{FTRL with \replaced{self-concordant}{a} barrier regularizer} 
\label{alg_ftrl_barrier}
\hspace*{\algorithmicindent} \textbf{Input: } $\eta > 0$, a constraint set $\mathcal{X}$, and a self concordant barrier $R$ for $\mathcal{X}$.
\begin{algorithmic}[1]
\STATE $x_1 \in \argmin_{x\in\mathcal{X}} R(x)$. 
\FORALL{$t \in \mathbb{N}$}
	\STATE $x_{t + 1} \leftarrow \argmin_{x \in \mathcal{X}} \eta \sum_{\tau=1}^t \braket{ \nabla f_\tau ( x_\tau ), x - x_\tau } + R ( x )$. 
\ENDFOR
\end{algorithmic}
\end{algorithm}

\replaced{T}{In t}his section \replaced{provides}{, we provide} an arguably simpler analysis of LB-FTRL with linearized losses (Algorithm~\ref{alg_lb_ftrl}) for OPS\replaced{, showing}{. We show} that \added{the regret of } Algorithm~\ref{alg_lb_ftrl} \added{is} \deleted{achieves} $\tilde{O}(\sqrt{dT})$ \deleted{regret}.
\deleted{First, recall FTRL with a barrier regularizer (Algorithm~\ref{alg_ftrl_barrier}) and the results of 
.}

\added{We start with the following regret bound for FTRL with a self-concordant barrier regularizer due to \citet{Abernethy2012}.}

\begin{theorem}[\citet{Abernethy2012}]\label{thm_ftrl_barrier}
	Let $\norm{\cdot}_{x}$ be the local norm associated with $R$ \added{at the point $x$}.
	Assume that $\eta\norm{\nabla f_t(x_t)}_{x_t}^{\ast 2} \leq 1/4$ for all $t\in\mathbb{N}$.
	Then\added{,} for any $x\in\mathcal{X}$, Algorithm~\ref{alg_ftrl_barrier} satisfies
	\begin{equation*}
		R_T(x) \leq \frac{R(x) - R(x_1)}{\eta} + 2\eta\sum_{t=1}^T \norm{\nabla f_t(x_t)}_{x_t}^{\ast 2}, \quad \forall x\in\mathcal{X}.
	\end{equation*}
\end{theorem}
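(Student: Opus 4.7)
The plan is to follow the classical FTRL analysis: linearize the losses, split the linearized regret into a ``be-the-leader'' term and a ``stability'' term, and bound the stability term using self-concordance of $R$.

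Writing $g_t \coloneqq \nabla f_t(x_t)$, convexity of each $f_t$ gives $R_T(x) \leq \sum_{t=1}^T \braket{g_t, x_t - x}$, and I would split this as
\begin{equation*}
\sum_{t=1}^T \braket{g_t, x_t - x} = \underbrace{\sum_{t=1}^T \braket{g_t, x_{t+1} - x}}_{(A)} + \underbrace{\sum_{t=1}^T \braket{g_t, x_t - x_{t+1}}}_{(B)}.
\end{equation*}
The term $(A)$ is handled by the standard ``be-the-leader'' induction: since $x_{t+1}$ minimizes $\eta \sum_{\tau \leq t} \braket{g_\tau, \cdot} + R(\cdot)$ on $\mathcal{X}$, a routine telescoping argument yields $\eta \cdot (A) \leq R(x) - R(x_1)$, which gives the first term of the claimed bound after dividing by $\eta$.

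For $(B)$, I would apply the local-norm duality $\braket{g_t, x_t - x_{t+1}} \leq \norm{g_t}_{x_t}^{\ast} r_t$ with $r_t \coloneqq \norm{x_t - x_{t+1}}_{x_t}$, reducing the task to a stability estimate on $r_t$. Since $R$ is a self-concordant barrier, both $x_t$ and $x_{t+1}$ lie in $\inte \mathcal{X}$, where the first-order optimality conditions hold as equalities. Subtracting the two optimality conditions gives $\nabla R(x_{t+1}) - \nabla R(x_t) = -\eta g_t$. Plugging this identity into Lemma~\ref{lem_self_concordance} (applied to the $1$-self-concordant function $R$) yields
\begin{equation*}
\eta \braket{g_t, x_t - x_{t+1}} = \braket{\nabla R(x_{t+1}) - \nabla R(x_t), x_{t+1} - x_t} \geq \frac{r_t^2}{1 + r_t}.
\end{equation*}
Combining with $\eta \braket{g_t, x_t - x_{t+1}} \leq \eta \norm{g_t}_{x_t}^{\ast} r_t$ and solving for $r_t$ yields $r_t \leq \eta \norm{g_t}_{x_t}^{\ast}/(1 - \eta \norm{g_t}_{x_t}^{\ast})$; the small-step hypothesis then gives $r_t \leq 2 \eta \norm{g_t}_{x_t}^{\ast}$, and hence $(B) \leq 2\eta \sum_t \norm{g_t}_{x_t}^{\ast 2}$, as claimed.

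The main obstacle is the stability estimate above: one has to match the small-step hypothesis in the theorem to the condition needed to invert the inequality $r_t / (1+r_t) \leq \eta \norm{g_t}_{x_t}^{\ast}$ into a clean linear bound $r_t \leq 2\eta \norm{g_t}_{x_t}^{\ast}$, and also to ensure that the minimizers in both rounds lie in the interior of $\mathcal{X}$ so that the unconstrained optimality condition is available. Everything else (linearization, be-the-leader induction, local-norm duality) is routine, and a pleasant feature of this route is that the only genuine use of self-concordance is Lemma~\ref{lem_self_concordance} itself, which is already collected as a preliminary.
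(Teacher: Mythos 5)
The paper does not prove this theorem; it is imported as a citation to \citet{Abernethy2012} and used as a black box in Appendix~\ref{app_lbftrl}, so there is no in-paper argument to compare against. Your proof is the standard one for this result and is essentially correct: the be-the-leader telescoping handles the term $(A)$; the barrier property of $R$ places every iterate in $\inte\mathcal{X}$, so the two stationarity conditions subtract to $\nabla R(x_{t+1}) - \nabla R(x_t) = -\eta g_t$; and Lemma~\ref{lem_self_concordance} applied to $R$ converts this into the stability bound $r_t/(1+r_t) \leq \eta\norm{g_t}_{x_t}^{\ast}$. It is worth noting that this is the mirror image of the paper's proof of Lemma~\ref{lem_difference}: the same three ingredients appear there, but self-concordance is assumed of the \emph{losses} rather than the regularizer, the optimality condition is only an inequality, and the barrier inequality of Definition~\ref{def_barrier} plays the role of your dual-norm step.

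The one point you flag but do not close is real. To invert $r_t/(1+r_t) \leq \eta\norm{g_t}_{x_t}^{\ast}$ into $r_t \leq 2\eta\norm{g_t}_{x_t}^{\ast}$ you need $\eta\norm{g_t}_{x_t}^{\ast} \leq 1/2$, whereas the hypothesis as printed is $\eta\norm{g_t}_{x_t}^{\ast 2} \leq 1/4$, i.e.\ $\eta$ times the \emph{squared} dual norm. The latter does not imply the former when $\eta > 1$: take $\eta = 100$ and $\norm{g_t}_{x_t}^{\ast} = 1/20$, so that $\eta\norm{g_t}_{x_t}^{\ast 2} = 1/4$ but $\eta\norm{g_t}_{x_t}^{\ast} = 5$, and the inequality $r_t/(1+r_t) \leq 5$ is vacuous. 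In \citet{Abernethy2012} the condition is $\eta\norm{\nabla f_t(x_t)}_{x_t}^{\ast} \leq 1/4$ without the square, under which your argument goes through with room to spare; and in the only place the paper invokes the theorem (the corollary of Appendix~\ref{app_lbftrl}) one has $\eta \leq 1/4$ and $\norm{\nabla f_t(x_t)}_{x_t}^{\ast} \leq 1$ by Lemma~\ref{lem_bounded_gradient}, so both readings of the hypothesis hold and nothing downstream is affected. To make your proof match the statement verbatim, either quote the unsquared condition or add the (harmless in context) assumption $\eta \leq 1$.
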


{
\replaced{T}{Since t}he logarithmic barrier $h$ is not a self-concordant barrier for $\Delta$ \deleted{(it is a self-concordant barrier for $\mathbb{R}_+^d$)}, \added{so} Theorem~\ref{thm_ftrl_barrier} does not directly apply to Algorithm~\ref{alg_lb_ftrl}.
Nevertheless, there is a trick to hand\added{l}e the \replaced{issue}{problem}.
\added{As \citet{van-Erven2020} and \citet{Mhammedi2022} do, } \replaced{c}{C}onsider an affine transformation $\Pi:\text{\replaced{$\mathcal{W}$}{$\Sigma$}}\to\Delta$ defined as follows.
\deleted{Such transformation was also considered in previous works 
.}
\begin{align*} 
	&\text{\replaced{$\mathcal{W}$}{$\Sigma$}} \coloneqq \bigg\{ w\in\mathbb{R}_{+}^{d-1}:
	w_i\geq 0\ \forall i\in[d-1],\quad
	\sum_{i=1}^{d-1} w_i \leq 1 \bigg\},\\
	&\Pi(w) = Aw + u_d,\quad A = \begin{pmatrix}
		I_{d-1} \\ -e^\top
	\end{pmatrix},\quad u_d = (0,\ldots,0,1)^\top , 
\end{align*}
\added{where with a slight abuse of notation, $e$ denotes the all-one vector in $\mathbb{R}^{d - 1}$.}
\added{The transformation} $\Pi$ is a bijection between $\text{\replaced{$\mathcal{W}$}{$\Sigma$}}$ and $\Delta$.
\added{Then, }Algorithm~\ref{alg_lb_ftrl} can be equivalently written as
\begin{align*}
	&w_1 \in \argmin_{w\in\text{\replaced{$\mathcal{W}$}{$\Sigma$}}} \tilde{h}(w), \\
	&w_{t+1} \in \argmin_{w\in\text{\replaced{$\mathcal{W}$}{$\Sigma$}}} \eta\sum_{\tau=1}^t \braket{\nabla \tilde{f}_\tau(w_\tau), w - w_\tau} + \tilde{h}(w), \quad\forall t\geq 1\added{,}\\
	&x_t = \Pi(w_t), \quad\forall t\geq 1,
\end{align*}
where $\tilde{f}_{\text{\added{$\tau$}}} = f_{\text{\added{$\tau$}}}\circ\Pi$ and $\tilde{h} = h\circ\Pi$.
It \replaced{is easily}{can be} checked that $\tilde{h}$ is a self-concordant barrier for $\text{\replaced{$\mathcal{W}$}{$\Sigma$}}$.
We then obtain the following result\deleted{s}, as a corollary of Theorem~\ref{thm_ftrl_barrier}.
}

{
\begin{theorem} \label{thm_lb_ftrl}
	Let $\norm{\cdot}_x$ be the local norm associated with $h$ \added{at the point $x$}.
	Assume that $\eta\norm{\nabla f_t(x_t)}_{x_t}^{\ast 2}\leq 1/4$ for all $t\in\mathbb{N}$.
	Then, Algorithm~\ref{alg_lb_ftrl} satisfies
	\begin{equation*}
		R_T(x) \leq \frac{h(x) - h(x_1)}{\eta} + 2\eta\sum_{t=1}^T\norm{\nabla f_t(x_t)}_{x_t}^{\ast 2},\quad\forall x\in\Delta.
	\end{equation*}
\end{theorem}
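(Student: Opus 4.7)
The plan is to reduce Theorem \ref{thm_lb_ftrl} to Theorem \ref{thm_ftrl_barrier} by working with the equivalent $w$-parameterized reformulation of Algorithm \ref{alg_lb_ftrl} displayed just above, and then translate the resulting bound back to the $x$-coordinates.

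First I would verify that $\tilde h = h \circ \Pi$ is indeed a self-concordant barrier for $\mathcal W$. Plugging in the definitions, $\tilde h(w) = -\sum_{i=1}^{d-1}\log w_i - \log\bigl(1 - \sum_{i=1}^{d-1} w_i\bigr)$, which is the standard logarithmic barrier for the simplex $\mathcal W$ and hence a $d$-self-concordant barrier. This makes the equivalent reformulation of Algorithm \ref{alg_lb_ftrl} a particular instance of Algorithm \ref{alg_ftrl_barrier} with $\mathcal X = \mathcal W$, $R = \tilde h$, and losses $\tilde f_t = f_t \circ \Pi$, so Theorem \ref{thm_ftrl_barrier} is in principle applicable.

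The substantive step is to relate the local-norm terms produced in the two coordinates. By the chain rule, $\nabla \tilde f_t(w_t) = A^\top \nabla f_t(x_t)$ and $\nabla^2 \tilde h(w_t) = A^\top \nabla^2 h(x_t)\, A$, so writing $H = \nabla^2 h(x_t)$ and $y = \nabla f_t(x_t)$ one has
\[
\|\nabla \tilde f_t(w_t)\|^{*2}_{w_t,\tilde h} = y^\top A(A^\top H A)^{-1} A^\top y .
\]
The key matrix inequality I would establish is $A(A^\top H A)^{-1} A^\top \preceq H^{-1}$. Introducing $B = H^{1/2} A$ (a $d \times (d-1)$ matrix of full column rank), the left side equals $H^{-1/2}\bigl(B(B^\top B)^{-1} B^\top\bigr)H^{-1/2}$; the middle factor is the orthogonal projector onto the range of $B$ and hence $\preceq I$. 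This immediately yields $\|\nabla \tilde f_t(w_t)\|^{*2}_{w_t,\tilde h} \le \|\nabla f_t(x_t)\|^{*2}_{x_t,h}$. A useful byproduct is that the assumption $\eta \|\nabla f_t(x_t)\|^{*2}_{x_t,h} \le 1/4$ of our theorem implies the assumption $\eta \|\nabla \tilde f_t(w_t)\|^{*2}_{w_t,\tilde h} \le 1/4$ required by Theorem \ref{thm_ftrl_barrier}, so the reduction is legitimate.

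To finish, I would apply Theorem \ref{thm_ftrl_barrier} in the $w$-coordinates and translate termwise: $\tilde f_t(w_t) - \tilde f_t(\Pi^{-1}(x)) = f_t(x_t) - f_t(x)$ and $\tilde h(\Pi^{-1}(x)) - \tilde h(w_1) = h(x) - h(x_1)$ by the bijectivity of $\Pi$ and the definition of $\tilde h$, while the gradient-squared term is bounded using the local-norm comparison above. The resulting inequality is exactly the statement of Theorem \ref{thm_lb_ftrl}. I expect the only nontrivial step to be the matrix inequality $A(A^\top H A)^{-1} A^\top \preceq H^{-1}$; everything else is bookkeeping enabled by the affine reparameterization.
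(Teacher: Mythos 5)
Your proposal is correct and follows essentially the same route as the paper: apply Theorem~\ref{thm_ftrl_barrier} to the $w$-parameterized reformulation and then bound $\norm{\nabla \tilde{f}_t(w_t)}_{w_t}^{\ast 2} = \langle \nabla f_t(x_t), A(A^\top\nabla^2 h(x_t)A)^{-1}A^\top \nabla f_t(x_t)\rangle$ by $\norm{\nabla f_t(x_t)}_{x_t}^{\ast 2}$. Your projection argument via $B = H^{1/2}A$ supplies a clean proof of the matrix inequality that the paper asserts without justification, and your observation that the assumption of Theorem~\ref{thm_ftrl_barrier} is thereby inherited from the stated hypothesis is a detail the paper leaves implicit.
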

}

{
\begin{proof}
	By the equivalence mentioned above, \replaced{Theorem~\ref{thm_ftrl_barrier} implies}{we may apply Theorem~\ref{thm_ftrl_barrier} to the sequence $\{w_t\}$ and obtain}
	\begin{equation*}
		\sum_{t=1}^T \tilde{f}_t(w_t) - \tilde{f}_t(w)
		\leq \frac{\tilde{h}(w) - \tilde{h}(w_1)}{\eta} + 2\eta\sum_{t=1}^T \norm{\nabla \tilde{f}_t(w_t)}_{w_t}^{\ast 2},\quad\forall w \in \mathcal{W},
	\end{equation*}
	where $\norm{\cdot}_{w}$ is the local norm associated with $\tilde{h}$ \added{at $w$}.
	It remains to notice that 
	\begin{equation*}
		\norm{\nabla \tilde{f}_t(w_t)}_{w_t}^{\ast 2}
		= \langle \nabla f_t(x_t), A(A^\top\nabla^2h(x_t)A)^{-1}A^\top \nabla f_{\added{t}}(x_t) \rangle
		\leq \norm{\nabla f_t(x_t)}_{x_t}^{\ast 2}.
	\end{equation*}
	\replaced{Then, }{and} the theorem follows.
\end{proof}
}

{
The following fact was used in \citep{van-Erven2020}.
We provide \replaced{a}{the} proof for completeness.
\begin{lemma} \label{lem_bounded_gradient}
	It holds that $\norm{\nabla f_t(x_t)}_{x_t}^{\ast 2}\leq 1$ for all $t\in\mathbb{N}$.
\end{lemma}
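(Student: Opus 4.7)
The plan is to deduce the bound from the two hats that Proposition~\ref{pro_loss_classical} puts on $f_t$: it is simultaneously a $1$-self-concordant barrier and $1$-smooth relative to $h$. This is the same dual structure that drives Theorem~\ref{thm_main}, so it is natural to reuse it here.

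First, I would invoke Definition~\ref{def_barrier} with $\nu = 1$ at the point $x_t$ to obtain
\[
\braket{ \nabla f_t ( x_t ), u } ^ 2 \leq \braket{ u, \nabla ^ 2 f_t ( x_t ) u }, \quad \forall u \in \mathbb{R}^d .
\]
The right-hand side is expressed in terms of $\nabla^2 f_t(x_t)$, whereas the local norm $\norm{\cdot}_{x_t}$ in the lemma is built from $\nabla^2 h(x_t)$. The relative smoothness supplies precisely the bridge needed: since $h - f_t$ is convex on $\mathbb{R}_{++}^d$ and both functions are twice differentiable at $x_t \in \ri \Delta$, we have the Löwner inequality $\nabla^2 f_t(x_t) \preceq \nabla^2 h(x_t)$. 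Chaining the two estimates yields
\[
\braket{ \nabla f_t ( x_t ), u } ^ 2 \leq \braket{ u, \nabla ^ 2 h ( x_t ) u } = \norm{ u }_{x_t}^2, \quad \forall u \in \mathbb{R}^d .
\]

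Finally, the definition of the dual local norm gives
\[
\norm{ \nabla f_t ( x_t ) }_{x_t}^{\ast 2} = \sup_{u \neq 0} \frac{ \braket{ \nabla f_t ( x_t ), u }^2 }{ \norm{ u }_{x_t}^2 } \leq 1,
\]
which is the claim. There is no genuine obstacle in this argument; the only conceptual point to get right is remembering that Definition~\ref{def_barrier} controls gradients through the Hessian of the barrier \emph{itself}, so relative smoothness is what lets us switch to the Hessian of $h$. As a sanity check, a direct calculation using $\nabla f_t(x_t) = - a_t / \braket{ a_t, x_t }$ and $( \nabla^2 h ( x_t ) )^{-1} = \mathrm{diag}( x_t(i)^2 )$ reduces the claim to the elementary inequality $\sum_i y_i^2 \leq ( \sum_i y_i )^2$ for $y_i = a_t(i) x_t(i) \geq 0$, which confirms the same bound and can be recorded as an alternative, fully explicit proof.
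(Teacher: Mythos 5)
Your proof is correct and rests on the same two facts as the paper's own argument: the barrier inequality $\braket{ \nabla f_t ( x_t ), u }^2 \leq \braket{ u, \nabla^2 f_t ( x_t ) u }$ (which for the OPS loss holds with equality, since $\nabla f_t ( x_t ) \nabla f_t ( x_t )^\top = \nabla^2 f_t ( x_t )$) and the L\"owner bound $\nabla^2 f_t ( x_t ) \preceq \nabla^2 h ( x_t )$ coming from relative smoothness. The only difference is in the last step, and it is cosmetic: you conclude via the supremum characterization of the dual local norm, whereas the paper inverts the matrix inequality through an $\varepsilon$-regularization and lets $\varepsilon \to 0$; your route avoids that limit and, as a small bonus, generalizes verbatim to any loss that is a $\nu$-self-concordant barrier and $L$-smooth relative to $h$, yielding the bound $\nu L$.
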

\begin{proof}
	By Proposition~\ref{pro_loss_classical}, we have $\nabla f_t(x_t)\nabla f_t(x_t)^\top = \nabla^2 f_t(x_t) \leq \nabla^2 h(x_t)$.
	For any $\varepsilon>0$,
	\begin{align*}
		& \langle \nabla f_t(x_t), (\varepsilon I + \nabla^2 h(x_t))^{-1} \nabla f_t(x_t) \rangle \\
		& \quad \leq \langle \nabla f_t(x_t), (\varepsilon I + \nabla f_t(x_t)\nabla f_t(x_t)^\top)^{-1} \nabla f_t(x_t) \rangle \\
		& \quad = \frac{\norm{\nabla f_t(x_t)}_2^2}{\varepsilon + \norm{\nabla f_t(x_t)}_2^2} . 
	\end{align*}
	Letting $\varepsilon\to 0$ completes the proof.
\end{proof}
}

\added{
By Theorem~\ref{thm_lb_ftrl}, Lemma~\ref{lem_bounded_gradient}, and Lemma~\ref{lem_luo}, we obtain the following.
\begin{corollary}
	For $\eta\leq 1/4$, Algorithm~\ref{alg_lb_ftrl} satisfies
	\begin{equation*}
		R_T(x) \leq \frac{d\log T}{\eta} + 2\eta T + 2,\quad\forall x\in\Delta.
	\end{equation*}
\end{corollary}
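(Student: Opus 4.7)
The plan is to chain the three cited tools in the natural order. The direct application of Theorem~\ref{thm_lb_ftrl} with $x$ itself is not useful because $h(x) = -\sum_i \log x(i)$ blows up at the relative boundary of $\Delta$, so I first invoke Lemma~\ref{lem_luo} (its proof uses only the convexity of $f_t$ and the monotonicity $\langle a_t, x \rangle \leq \langle a_t, \bar x \rangle / (1 - 1/T)$, both of which hold verbatim in the classical OPS setting) to reduce to the shifted comparator
\[
\bar x \coloneqq \left(1 - \frac{1}{T}\right) x + \frac{1}{Td}\, e ,
\]
at the cost of an additive $+2$ in the regret. The benefit is that every coordinate of $\bar x$ is at least $1/(Td)$, so $h(\bar x)$ is controlled.

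Next, to apply Theorem~\ref{thm_lb_ftrl} to $\bar x$, I need to verify the step-size condition $\eta \norm{\nabla f_t(x_t)}_{x_t}^{\ast 2} \leq 1/4$ for all $t$. This is immediate from Lemma~\ref{lem_bounded_gradient}, which says $\norm{\nabla f_t(x_t)}_{x_t}^{\ast 2} \leq 1$, combined with the hypothesis $\eta \leq 1/4$. The same lemma also lets me collapse the second term of Theorem~\ref{thm_lb_ftrl} to
\[
2 \eta \sum_{t=1}^T \norm{\nabla f_t(x_t)}_{x_t}^{\ast 2} \leq 2 \eta T .
\]

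The remaining quantitative step is to bound $h(\bar x) - h(x_1)$. Since $x_1 \in \argmin_{x \in \Delta} h(x)$ and $h$ is symmetric in its arguments on $\Delta$, $x_1 = e/d$ and $h(x_1) = d \log d$. Using $\bar x(i) \geq 1/(Td)$ for every $i \in [d]$,
\[
h(\bar x) = -\sum_{i=1}^d \log \bar x(i) \leq d \log(Td) = d \log T + d \log d ,
\]
so $h(\bar x) - h(x_1) \leq d \log T$. Putting everything together,
\[
R_T(x) \leq R_T(\bar x) + 2 \leq \frac{h(\bar x) - h(x_1)}{\eta} + 2 \eta T + 2 \leq \frac{d \log T}{\eta} + 2 \eta T + 2 ,
\]
which is the claimed bound. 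There is no real obstacle; the only subtlety is remembering that Lemma~\ref{lem_luo}, although stated in the quantum section, is proved by an argument that works just as well on the simplex, and that the boundary blow-up of the logarithmic barrier forces the shift to $\bar x$ before invoking the FTRL bound.
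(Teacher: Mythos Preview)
Your proof is correct and follows exactly the approach indicated in the paper: apply Lemma~\ref{lem_luo} to pass to the shifted comparator $\bar x$, use Lemma~\ref{lem_bounded_gradient} together with $\eta\le 1/4$ to satisfy the hypothesis of Theorem~\ref{thm_lb_ftrl} and to bound the gradient-norm sum by $2\eta T$, and then bound $h(\bar x)-h(x_1)\le d\log T$ via $\bar x(i)\ge 1/(Td)$. This is precisely the computation the paper has in mind (cf.\ the analogous bound $D_h(\bar\rho,I/d)\le d\log T$ in Appendix~\ref{app_q_lb_omd}).
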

}

\added{Optimizing over $\eta$, we get the desired $\tilde{O} ( \sqrt{d T} )$ regret bound.}

\deleted{
The guarantee of the optimistic variant of Algorithm~\ref{alg_lb_ftrl} can be derived similarly from the works of 
The main drawback of the above approach is that it is unclear what should be the quantum generalization of the transformation $\Pi$.
A more direct approach that applies to the quantum setup is provided in the notes 
}

\end{document}